\documentclass[11pt]{article}

\usepackage[margin=1in]{geometry}
\usepackage{microtype}

\usepackage{amsmath}
\usepackage{amssymb}
\usepackage{amsthm}
\usepackage{mathtools}

\usepackage{graphicx}
\graphicspath{{figures/}}
\usepackage{booktabs}
\usepackage{array}
\usepackage{tabularx}
\usepackage{siunitx}
\usepackage[font=small,labelfont=bf]{caption}

\usepackage{algorithm}
\usepackage{algpseudocode}
\usepackage[titletoc,title]{appendix}
\usepackage[section]{placeins}

\usepackage{hyperref}
\usepackage[nameinlink,capitalise,noabbrev]{cleveref}

\usepackage[numbers,sort&compress]{natbib}

\makeatletter
\renewcommand{\theHALG@line}{\thealgorithm.\arabic{ALG@line}}
\makeatother

\newcommand{\PP}{\mathbb{P}}
\newcommand{\ind}{\mathbf{1}}
\newcommand{\countfrac}[2]{\makebox[1.35em][r]{#1}/\makebox[1.35em][l]{#2}}
\newcommand{\winloss}[2]{\makebox[1.35em][r]{#1}--\makebox[1.0em][l]{#2}}
\newcommand{\tablehead}[1]{\multicolumn{1}{c}{#1}}
\newcolumntype{L}[1]{>{\raggedright\arraybackslash}p{#1}}
\newcolumntype{C}[1]{>{\centering\arraybackslash}p{#1}}
\newcolumntype{Y}{>{\raggedright\arraybackslash}X}
\theoremstyle{plain}
\newtheorem{theorem}{Theorem}
\newtheorem{lemma}{Lemma}
\newtheorem{corollary}{Corollary}
\AddToHook{env/theorem/begin}{\crefalias{section}{theorem}}
\AddToHook{env/lemma/begin}{\crefalias{section}{lemma}}
\AddToHook{env/corollary/begin}{\crefalias{section}{corollary}}

\theoremstyle{definition}

\AddToHook{env/definition/begin}{\crefalias{section}{definition}}

\theoremstyle{remark}

\AddToHook{env/remark/begin}{\crefalias{section}{remark}}

\hypersetup{
  hidelinks,
  pdftitle={Conditional Inference Trees and Forests for Feature Selection},
  pdfauthor={Robert Milletich, Justin Downes, Steve Goley, Newel Hirst}
}

\title{Conditional Inference Trees and Forests for Feature Selection}
\author{%
  Robert Milletich, Justin Downes, Steve Goley, Newel Hirst\\
  Amazon Web Services\\
  \texttt{\char`\{rmilleti,jusdow,sgoley,nhirst\char`\}@amazon.com}
}
\date{}

\begin{document}
\maketitle

\begin{abstract}
Conditional inference trees (CIT) and conditional inference forests (CIF)
reduce split-selection bias by testing features before choosing split
thresholds, but repeated permutation tests and threshold searches can make these
methods computationally expensive. We study CIT and CIF as top-$k$
feature-ranking methods for downstream prediction using real-data benchmarks,
runtime ablations, and synthetic feature-recovery experiments.
At a fixed node, if the features and permutation budget do not depend on the
node responses, Bonferroni-corrected $+1$ Monte Carlo permutation $p$-values
control nodewise rejection under the complete permutation null. CIF ranks 4th
among 17 classification methods on 22 datasets and 3rd among 18 regression
methods on 8 datasets.
With Bonferroni correction held fixed, the CIF runtime ablations indicate that
adaptive stopping and the number of thresholds searched have the largest measured
effect on runtime: turning off adaptive stopping and using exact threshold
search increase fitting time by 4.0--8.4$\times$ and 1.9--10.8$\times$,
respectively, while downstream score changes are at most 0.011. Sparse high-$p$
simulations indicate that forest feature sampling can leave informative features
out of many split decisions. Overall,
the results support CIF as a top-$k$ feature-ranking method in the
evaluated downstream prediction benchmarks.

\end{abstract}

\section{Introduction}
\label{sec:introduction}

Conditional inference separates feature selection from threshold optimization to
reduce split selection bias. Classical
CART trees choose both a feature and a threshold from the same
label-dependent search, which helps explain the well-documented preference for
features with many admissible cutpoints
\citep{Breiman1984ClassificationAndRegressionTrees,LohShih1997SplitSelectionMethods,KimLoh2001ClassificationTreesUnbiasedMultiwaySplits,Loh2002RegressionTreesUnbiasedVariableSelectionInteractionDetection,Shih2004NoteOnSplitSelectionBiasClassificationTrees,StroblBoulesteixAugustin2007UnbiasedSplitSelectionGini,Strobl2007BiasRFVariableImportance}.
In the conditional inference framework \citep{Hothorn2006UnbiasedRecursivePartitioning,HothornHornikVandeWielZeileis2006LegoSystemConditionalInference},
Stage~A tests features for association with the response at the
current node and selects among them; Stage~B then optimizes a threshold within
the selected feature. That separation addresses split selection bias, but the
classical approach can require substantial computation through large
permutation budgets and exhaustive threshold scans at every node.
The same conditional inference framework also motivated later work on
conditional variable importance \citep{Strobl2008ConditionalVIM}.
A related line of work uses forests directly for variable screening and ranking,
rather than only for final prediction
\citep{GenuerPoggiTuleauMalot2010VariableSelectionUsingRandomForests}.

This paper evaluates conditional inference trees (CIT) and conditional
inference forests (CIF) as feature selection methods. For each seed and
cross-validation fold, a method produces a ranking from the training fold.
Fixed downstream learners are then fit on top-$k$ prefixes of that ranking and
evaluated on held-out data, with results summarized across datasets. The
benchmark asks whether tree- and forest-derived rankings put predictive
variables early enough to improve fixed downstream models.
We also ask whether CIT/CIF runtime hyperparameters reduce computation while
preserving ranking quality, and where CIF recovery is weaker in high-$p$ or
synthetic settings. This feature-selection benchmark is separate from work on
trees and forests as direct
predictors or causal estimators
\citep{Breiman2001RandomForests,WagerAthey2018CausalForests,AtheyTibshiraniWager2019GeneralizedRandomForests}.
Because several methods have tunable configurations, the benchmark chooses one
configuration per method and task. As a sensitivity check, leave-one-dataset-out
(LODO) analyses repeat the configuration choice after holding out each dataset, using
the complete-case panels defined in \cref{sec:experiments-analysis-plan}.

The theorem studies the Stage~A feature test at a fixed node. It assumes that
the features tested at the node and the resample budget are fixed independently
of the node responses, so exhaustive fixed-$B$ $p$-values have a clear
permutation reference distribution under the nodewise complete permutation null.
The benchmarks then evaluate the tree and forest feature rankings empirically.

\paragraph{Contributions.}
This paper contributes a real-data benchmark of conditional-inference feature
rankings for top-$k$ downstream prediction. The benchmark compares CIF with
ctree, cforest, CIT, and other baselines in classification and companion
regression analyses, and includes ablations of the main CIT/CIF runtime
hyperparameters. The paper also states the fixed-node, exhaustive fixed-$B$
Stage~A guarantee
(\cref{thm:plusone-superuniform,cor:stageA-global-null}); the full tree and
forest rankings are evaluated in the benchmarks. Finally, the
high-$p$ and synthetic analyses show cases with weaker top-$k$ recovery and how
forest feature sampling can reduce how often forest splits use informative
features (\cref{sec:boundary}).

\section{Tree-growing method}
\label{sec:method}

At each internal node, conditional inference trees make two decisions. Stage~A
tests features for association with the response and chooses a feature
to split. Stage~B then searches thresholds only for that selected feature; if no
valid split is accepted, the node becomes terminal. In this paper, CIT
denotes the resulting single-tree feature selection method. CIF denotes a
bootstrap ensemble of such trees with feature sampling at each node
and feature rankings formed by aggregating split importances across trees.
CIF-all disables feature subsampling: every nonconstant available
feature can enter Stage~A at each node. This changes the Stage~A feature set,
the Stage~A multiplicity correction, and the per-node computational cost.

\subsection{Stage~A and Stage~B}
\label{sec:stageA-stageB}

At a node $t$, write $(X_t,Y_t)$ for the samples reaching that node. Let
$F_{t,\mathrm{avail}}$ be the feature set carried into the node by the
tree-growing algorithm, including any feature muting inherited from ancestors.
Let $F_{t,\mathrm{nonconst}} \subseteq F_{t,\mathrm{avail}}$ be the features
that are nonconstant on the node samples. After any feature
subsampling, Stage~A considers a set
$F_t \subseteq F_{t,\mathrm{nonconst}}$; its size $m_t=|F_t|$ is the
Bonferroni denominator for that node.
Any label-independent auxiliary randomness used by a selector is denoted by $U$
and is held fixed in the fixed-node guarantee.

When adaptive stopping is enabled during tree growing, Stage~A can evaluate
features sequentially and stop before every feature in $F_t$ has been tested.
Let $E_t \subseteq F_t$ denote the evaluated subset; in the exhaustive fixed-$B$
reference rule, $E_t=F_t$. The fixed-node theorem in \cref{sec:theory} applies
to that reference rule: the features in $F_t$ and the budget $B$ are fixed
independently of the node responses, and every $p_{t,j}$ for $j\in F_t$ uses
that budget. Adaptive tree growing adds response-dependent node selection,
inherited feature muting, and stopping-time scores used for tree construction.

\paragraph{Stage~A (feature selection).}
In the exhaustive fixed-$B$ setting, Stage~A follows the feature-selection and
threshold-search separation of conditional inference trees
\citep{Hothorn2006UnbiasedRecursivePartitioning}. The original conditional
inference tests use the permutation-statistic framework of
\citet{StrasserWeber1999PermutationStatistics}; here, Monte Carlo label
permutations calibrate the configured selector statistic. Stage~A
computes a selector statistic $T^{\mathrm{sel}}_j(X_{t,j},Y_t;U)$ and
permutation $p$-value $p_{t,j}$ for every feature $j \in F_t$, then selects
\begin{equation}
  p_t^{\star,\mathrm{ref}} := \min_{j \in F_t} p_{t,j},
  \qquad
  M_t^\star := \{j \in F_t : p_{t,j}=p_t^{\star,\mathrm{ref}}\}.
\end{equation}
Let $\tau_t=\alpha_{\mathrm{sel}}/m_t$ when Bonferroni is enabled and
$\tau_t=\alpha_{\mathrm{sel}}$ otherwise. If
$p_t^{\star,\mathrm{ref}} \ge \tau_t$, Stage~A returns no feature and the node
becomes terminal. If $p_t^{\star,\mathrm{ref}} < \tau_t$, the selected feature
$j_t^{\star,\mathrm{ref}}$ is drawn uniformly from $M_t^\star$ using reservoir
sampling, and the node enters Stage~B.

During adaptive tree growing, Stage~A uses the same decision structure but can
order features by a preliminary score and evaluate only the subset $E_t$. The
returned value $q_t^{\mathrm{feat}}$ is used to grow the tree; it is not an
exhaustive fixed-$B$ $p$-value covered by \cref{thm:plusone-superuniform}. Under
the exhaustive reference rule with Bonferroni correction,
$\min(1,m_t p_t^{\star,\mathrm{ref}})$ is a conservative complete-null
$p$-value for the fixed-node Stage~A test. When feature muting is enabled,
features with non-rejecting Stage~A $p$-values can be removed from descendant
feature pools. When muting is disabled, descendants keep
$F_{t,\mathrm{avail}}$ except for deterministic constant feature pruning.

\paragraph{Stage~B (threshold search).}
Given a selected feature $j_t^\star$, Stage~B starts from midpoints between
ordered unique values of $X_{t,j_t^\star}$. The exact method uses all such
midpoints. The random method samples a bounded subset of midpoints, while the
percentile and histogram methods choose bounded representative thresholds
derived from the midpoint distribution. Thresholds that would leave either child
below the minimum leaf size are filtered out. If no threshold remains, Stage~B
returns no split.

For the remaining finite threshold set $C_{t,j_t^\star}$, Stage~B selects the
threshold used by the tree. The tested statistic at threshold $c$ is the
weighted child impurity
\[
  \frac{|Y_t^L(c)|}{|Y_t|}\mathrm{Imp}(Y_t^L(c))
  + \frac{|Y_t^R(c)|}{|Y_t|}\mathrm{Imp}(Y_t^R(c)),
\]
where $\mathrm{Imp}$ is the chosen node impurity measure. This statistic is used
in a left-tail test, and the exhaustive fixed-$B$ rule chooses the threshold with
the smallest Stage~B $p$-value, with uniform tie breaking. During adaptive tree
growing, threshold scanning may reorder thresholds and stop after the first
rejecting threshold, so the returned Stage~B score is a tree-construction score
rather than a global minimum over all retained thresholds. Let
$\tau_t^{\mathrm{split}}=\alpha_{\mathrm{split}}/|C_{t,j_t^\star}|$ when
Bonferroni is enabled for Stage~B and
$\tau_t^{\mathrm{split}}=\alpha_{\mathrm{split}}$ otherwise. If no evaluated
Stage~B $p$-value is below $\tau_t^{\mathrm{split}}$, the node becomes terminal.
After a threshold passes the Stage~B test, the final minimum required-improvement
check uses the corresponding weighted impurity decrease. Stage~B chooses the
split used by the tree; the returned Stage~B score is not a valid post-selection
$p$-value. The training skeleton in \cref{alg:citree} summarizes this flow and,
for the benchmarked CIT and CIF configurations, stores two values at each split:
the Stage~A permutation $p$-value for the selected feature and the Stage~B
permutation $p$-value for the selected threshold.

\subsection{Monte Carlo permutation \texorpdfstring{$p$}{p}-values}
\label{sec:plusone}

Let $T_{\mathrm{obs}}$ be the observed statistic and $T_1,\dots,T_B$ the same statistic
computed on $B$ random label permutations. For a right tail test we use the
Phipson--Smyth $+1$ $p$-value
\citep{NorthCurtisSham2002EmpiricalPValuesMonteCarlo,PhipsonSmyth2010PermutationPValues},
where $\ind\{\cdot\}$ denotes the indicator function:
\begin{equation}
  p_{\mathrm{MC}}^{\ge} := \frac{1 + \sum_{b=1}^B \ind\{T_b \ge T_{\mathrm{obs}}\}}{B+1}.
\end{equation}
For a left tail test we analogously define
\begin{equation}
  p_{\mathrm{MC}}^{\le} := \frac{1 + \sum_{b=1}^B \ind\{T_b \le T_{\mathrm{obs}}\}}{B+1}.
\end{equation}
Stage~A uses the right tail convention and Stage~B the left tail convention;
ties count as exceedances in both cases.

\begin{algorithm}[H]
  \caption{Conditional inference tree training skeleton (Stage~A $\rightarrow$ Stage~B)}
  \label{alg:citree}
  \begin{algorithmic}[1]
    \Require Training data $(X,Y)$, hyperparameters $\theta$
    \Ensure Trained tree $\mathcal{T}$
    \State $p \gets$ number of columns of $X$
    \State \Return \Call{GrowNode}{$X,Y,1,\{1,\dots,p\},\theta$}
    \Function{GrowNode}{$X_t,Y_t,d,F_{\mathrm{avail}},\theta$}
      \If{\Call{Stop}{$X_t,Y_t,d,\theta$}}
        \State \Return \Call{Leaf}{$Y_t$}
      \EndIf
      \State $(j_t^\star, q_t^{\mathrm{feat}}, m_t, F'_{\mathrm{avail}}) \gets \Call{StageA}{X_t,Y_t,F_{\mathrm{avail}},\theta}$
      \If{$j_t^\star = \bot$}
        \State \Return \Call{Leaf}{$Y_t$}
      \EndIf
      \State $(c_t^\star, q_t^{\mathrm{split}}) \gets \Call{StageB}{X_t,Y_t,j_t^\star,\theta}$
      \If{$c_t^\star = \bot$}
        \State \Return \Call{Leaf}{$Y_t$}
      \EndIf
      \State $(X_t^L,Y_t^L,X_t^R,Y_t^R) \gets$ \Call{Partition}{$X_t,Y_t,j_t^\star,c_t^\star$}
      \State $\Delta_t \gets \Call{ImpurityDecrease}{Y_t,Y_t^L,Y_t^R,\theta}$
      \If{$\Delta_t$ is below the minimum required impurity decrease}
        \State \Return \Call{Leaf}{$Y_t$}
      \EndIf
      \State Add $\Delta_t$ to the feature-importance accumulator for $j_t^\star$
      \State $u \gets$ internal node with split $(j_t^\star,c_t^\star)$ and stored p-value fields $(q_t^{\mathrm{feat}},q_t^{\mathrm{split}})$
      \State $u.\mathrm{left} \gets \Call{GrowNode}{X_t^L,Y_t^L,d+1,F'_{\mathrm{avail}},\theta}$
      \State $u.\mathrm{right} \gets \Call{GrowNode}{X_t^R,Y_t^R,d+1,F'_{\mathrm{avail}},\theta}$
      \State \Return $u$
    \EndFunction
  \end{algorithmic}
\end{algorithm}

\subsection{Forests and runtime hyperparameters}
\label{sec:bootstrap}

CIF combines bootstrap trees and averages their predictions. For feature
selection, it sums each fitted tree's normalized split importances and
renormalizes the forest vector. Each CIF tree uses the same Stage~A and Stage~B
node rules as CIT
\citep{Breiman2001RandomForests,HothornBuehlmannDudoitMolinaroVanDerLaan2006SurvivalEnsembles}.
The fixed-node theorem applies only to the Stage~A reference test described
above.
Across the CIT/CIF benchmark configurations, we hold Bonferroni correction,
adaptive stopping, feature muting, feature scanning, threshold scanning,
histogram-256 thresholding, and CIF bootstrap sampling fixed. The varying
settings are selector choice and honesty.

Runtime depends on the realized work at each node: features tested in Stage~A,
thresholds generated and retained in Stage~B, permutations evaluated before
stopping, and features carried into the node. Adaptive stopping changes the
number of permutations evaluated for each test
\citep{BesagClifford1991SequentialMonteCarloPValues,Gandy2009SequentialImplementationMonteCarloTests}.
Histogram thresholding reduces the Stage~B threshold set by replacing exhaustive
cutpoints with binned thresholds. Feature and threshold scanning change the
order in which features or thresholds are tested under early stopping; feature
muting can reduce the feature pool inherited by descendants.
Appendix~\ref{app:methods} gives algorithm and
reproducibility details, Appendix~\ref{app:complexity-details} gives runtime and
memory accounting, and \cref{sec:results-practical} reports the measured timing
effects.

\subsection{Feature rankings}
\label{sec:rank-construction}

The experiments evaluate each method through the ranking it produces on the
training fold. CIT and CIF rankings are fitted split-importance rankings, not
rankings of Stage~A $p$-values. For a fitted CIT, feature $j$ receives the sum,
over internal nodes that split on $j$, of the impurity decrease at that node:
parent impurity minus the weighted child impurity. If the total importance is
positive, the vector is normalized to sum to one; if the tree contributes no
positive impurity decrease, the vector remains zero. CIF sums these tree
importance vectors over trees and then normalizes the forest vector in the same
way. Stage~A
determines which features enter threshold optimization, while impurity decrease
scores only the accepted splits by their local reduction in impurity. Larger
importances are ranked earlier, and features never used by a split receive zero
importance.

This split-importance ranking differs from standard random-forest impurity
importance in how a feature enters threshold search. In CART-style trees, the
feature and threshold are chosen together by an impurity search, so a feature
with more possible thresholds has more chances to produce a large apparent
impurity decrease \citep{Strobl2007BiasRFVariableImportance}. In CIT and CIF,
Stage~A first selects a feature by a permutation test; Stage~B then searches
thresholds only for that feature. This reduces the advantage that
high-cardinality features get from having more possible thresholds. We use these
fitted split-importance scores as rankings and evaluate them through downstream
top-$k$ prediction and synthetic feature recovery.

\section{Fixed-node Stage~A control}
\label{sec:theory}

This section analyzes the exhaustive fixed-$B$ Stage~A feature selection rule at a
fixed node. The theorem proves superuniformity for a single $+1$ Monte Carlo
permutation $p$-value, and the corollary applies Bonferroni correction across the
fixed-node Stage~A tests. In the notation of \cref{sec:stageA-stageB}, this is
the reference case where every feature in $F_t$ is tested with the same fixed
nodewise budget $B$. All probabilities below condition on $(X_t,U)$, where $U$
collects label-independent auxiliary randomness other than the Monte Carlo
permutation draws.

\begin{theorem}[$+1$ permutation $p$-values are superuniform]
\label{thm:plusone-superuniform}
Assume A0.1 through A0.5 from \cref{app:assumptions}. Under the nodewise
complete permutation null at the fixed node $t$, conditional on $(X_t,U)$, the
observed statistic together with the $B$ random-permutation statistics has the
rank distribution of an exchangeable tuple of length $B+1$. Define the
right-tail $+1$ Monte Carlo $p$-value
\begin{equation}
  p_{\mathrm{MC}}^{\ge}
  := \frac{1 + \sum_{b=1}^B \ind\{T_b \ge T_{\mathrm{obs}}\}}{B+1}.
\end{equation}
Then for every $\alpha \in [0,1]$,
\begin{equation}
  \PP(p_{\mathrm{MC}}^{\ge} \le \alpha \mid X_t,U) \le \alpha.
\end{equation}
\end{theorem}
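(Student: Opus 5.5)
The plan is the standard rank argument for exchangeable tuples, done carefully enough to allow ties. Write $T_0 := T_{\mathrm{obs}}$. Conditional on $(X_t,U)$, the theorem gives that $(T_0,T_1,\dots,T_B)$ has the rank distribution of an exchangeable tuple. Concretely, this comes from A0.1--A0.5: under the complete permutation null, $Y_t$ is invariant in law under permutations given $X_t$. The $B$ Monte Carlo permutations are drawn i.i.d.\ uniformly, independently of $Y_t$, and the statistic is a fixed function of $(X_t,\pi Y_t,U)$. Composing a uniform random relabeling with the observed labels then shows that permuting indices does not change the joint law. Because $U$ and $B$ do not depend on $Y_t$, conditioning on $(X_t,U)$ keeps them fixed, so these facts can be taken as given from the statement.

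Next, define for each $i\in\{0,\dots,B\}$ the count
\[
  R_i := \sum_{k=0}^{B} \ind\{T_k \ge T_i\},
\]
which includes $k=i$. Then $p_{\mathrm{MC}}^{\ge} = R_0/(B+1)$. Exchangeability gives that $R_0,\dots,R_B$ all have the same distribution, so
\[
  \PP(R_0 \le c \mid X_t,U) = \frac{1}{B+1}\,\mathbb{E}\Bigl[\sum_{i=0}^{B}\ind\{R_i\le c\}\,\Big|\, X_t,U\Bigr].
\]

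The key deterministic step, and the only place ties need care, is the following claim: for any real numbers $T_0,\dots,T_B$ and any integer $c\ge 0$, $\#\{i: R_i \le c\}\le c$. To see it, let $S=\{i:R_i\le c\}$ and, if $S$ is nonempty, pick $i^\ast\in S$ with the smallest value $T_{i^\ast}$. Every $i\in S$ satisfies $T_i\ge T_{i^\ast}$, so $S\subseteq\{k:T_k\ge T_{i^\ast}\}$. This gives $|S|\le R_{i^\ast}\le c$. Ties only enlarge the counts $R_i$, which works in our favour.

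Combining the two steps, for $\alpha\in[0,1]$ set $c=\lfloor \alpha(B+1)\rfloor$. Then
\[
  \PP(p_{\mathrm{MC}}^{\ge}\le\alpha\mid X_t,U)=\PP(R_0\le c\mid X_t,U)\le \frac{c}{B+1}\le\alpha.
\]
The main obstacle is not this counting argument. It is justifying the exchangeability, specifically that the observed-plus-resampled tuple is exchangeable even though the permutations are drawn with replacement and can coincide with the identity. If the theorem's hypothesis is read as already granting exchangeability, that step is immediate. Otherwise I would prove it by writing $T_b=T(\pi_b Y_t)$ and $T_0=T(Y_t)$, and then conditioning on the orbit $\{\sigma Y_t\}$. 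Under the null, $Y_t\overset{d}{=}\sigma Y_t$ for a uniform $\sigma$ independent of everything else. Hence $(Y_t,\pi_1Y_t,\dots,\pi_BY_t)\overset{d}{=}(\sigma Y_t,\pi_1\sigma Y_t,\dots)$, which after the reparametrization $\pi_b\sigma\mapsto\rho_b$ becomes $B+1$ i.i.d.\ uniform draws from the orbit. That sequence is exchangeable.
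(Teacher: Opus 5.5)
Your proof is correct. Your exchangeability step replaces $Y_t$ by $\sigma Y_t$ for an independent uniform $\sigma$ and reparametrizes $\rho_b=\pi_b\sigma$; this is essentially the paper's Lemma~\ref{lem:identity-vs-random}, which does the same with $\Pi_0$ and $\Pi_b\circ\Pi_0^{-1}$.

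Where you genuinely differ is in how you bound the rank, particularly under ties. The paper splits into two cases. Without ties, $R$ is the descending rank of $T_{\mathrm{obs}}$ and is exactly uniform on $\{1,\dots,B+1\}$. With ties, it attaches an independent $\mathrm{Unif}(0,1)$ mark $V_b$ to each statistic and orders the pairs lexicographically, giving an exchangeable, almost surely tie-free tuple. It then concludes via the pointwise domination $p_{\mathrm{MC}}^{\ge}\ge\tilde p_{\mathrm{MC}}^{\ge}$.

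You instead symmetrize over indices, $\PP(R_0\le c\mid X_t,U)=\frac{1}{B+1}\mathbb{E}[\#\{i:R_i\le c\}\mid X_t,U]$. You then prove the deterministic inequality $\#\{i:R_i\le c\}\le c$ by taking the smallest $T_i$ in that set.

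Your route handles tied and untied statistics in one step and needs no auxiliary randomization. It therefore avoids checking that the augmented tuple is exchangeable and tie-free. It also uses exchangeability only to make the $R_i$ identically distributed, which follows from invariance under the transpositions $0\leftrightarrow i$. In the tie-free case the $R_i$ are a permutation of $1,\dots,B+1$, so your count equals $c$ exactly and you recover the same exact value $\lfloor (B+1)\alpha\rfloor/(B+1)$.

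The paper's route reuses the textbook uniform-rank fact and makes visible that the conservative tie convention is the only source of slack. It also produces a randomized tie-broken $p$-value that is exactly discretely uniform, which is the natural object if you ever want an exact rather than conservative randomized test.
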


This is the standard exchangeable-rank result for Monte Carlo permutation tests
\citep{NorthCurtisSham2002EmpiricalPValuesMonteCarlo,Ernst2004PermutationMethodsExactInference,HemerikGoeman2018ExactTestingRandomPermutations}.
\Cref{app:permutation-exchangeability,app:proof-plusone} state the
exchangeability condition and give the proof. The left-tail version follows by
applying the same result to $-T$.

\begin{corollary}[Stage~A complete null control at a fixed node]
\label{cor:stageA-global-null}
Assume A0.1 through A0.5 from \cref{app:assumptions}. Under the nodewise complete
null at $t$, let $F_t$ be nonempty and write $m_t = |F_t| \ge 1$. Let Stage~A
reject when
\begin{equation}
  \min_{j \in F_t} p_{t,j} \le \alpha_{\mathrm{sel}}/m_t.
\end{equation}
The tree-growing rule uses a strict comparison, so its rejection event is a
subset of the nonstrict event above.
Then
\begin{equation}
  \PP\!\left(\exists j \in F_t : p_{t,j} \le \alpha_{\mathrm{sel}}/m_t \,\middle|\, X_t,U\right)
  \le \alpha_{\mathrm{sel}}.
\end{equation}
\end{corollary}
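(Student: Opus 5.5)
The plan is a union bound over the $m_t$ fixed-node Stage~A tests, with each term controlled by \cref{thm:plusone-superuniform}. Throughout I work conditionally on $(X_t,U)$. The aim is to check that, after this conditioning, $F_t$, $m_t$, and the common budget $B$ are all deterministic. Assumptions A0.1--A0.5 give exactly this: the tested feature set and the budget are fixed independently of the node responses $Y_t$. So $m_t$ is a fixed positive integer, and the threshold $\alpha_{\mathrm{sel}}/m_t \in [0,1]$ is a constant that does not depend on the labels or on the permutation draws. This matters, because a label-dependent denominator would break the per-feature bound.

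Next, for each fixed $j \in F_t$, I would note that the nodewise complete null implies the marginal null for feature $j$. Here $p_{t,j}$ is the right-tail $+1$ Monte Carlo $p$-value built from $T^{\mathrm{sel}}_j(X_{t,j},Y_t;U)$ and its $B$ permuted copies. The permuted copies must be computed from the label permutations used for feature $j$, whether these are shared across features or drawn separately. \cref{thm:plusone-superuniform} then applies to this feature's statistic with $\alpha = \alpha_{\mathrm{sel}}/m_t$ and gives
\[
\PP\!\left(p_{t,j} \le \alpha_{\mathrm{sel}}/m_t \,\middle|\, X_t,U\right) \le \alpha_{\mathrm{sel}}/m_t .
\]
No independence across features is needed, so sharing the same permutations across features causes no trouble.

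Finally, I write the event in the statement as the union $\bigcup_{j\in F_t}\{p_{t,j} \le \alpha_{\mathrm{sel}}/m_t\}$ and apply Boole's inequality under the conditional measure. This gives a bound of at most $m_t\cdot\alpha_{\mathrm{sel}}/m_t = \alpha_{\mathrm{sel}}$. The remark about the strict comparison then follows by monotonicity. The event $\{\min_j p_{t,j} < \alpha_{\mathrm{sel}}/m_t\}$ is contained in the nonstrict event, so its probability is also at most $\alpha_{\mathrm{sel}}$. Since $p_t^{\star,\mathrm{ref}} = \min_j p_{t,j}$, this same observation shows that $\min(1, m_t p_t^{\star,\mathrm{ref}})$ is a valid conservative $p$-value.

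The main obstacle is the second step, which checks that the theorem's hypotheses hold featurewise under the conditioning. Specifically, the exchangeability of $(T_{\mathrm{obs}},T_1,\dots,T_B)$ for each $T^{\mathrm{sel}}_j$ must come from the complete permutation null as set out in \cref{app:assumptions}. I would first verify it directly. Under the null, $Y_t$ is permutation-invariant in law given $(X_t,U)$, and the permutations are drawn uniformly and independently of $Y_t$. Hence the tuple $(Y_t, \pi_1 Y_t, \dots, \pi_B Y_t)$ is exchangeable, and applying the fixed map $y\mapsto T^{\mathrm{sel}}_j(X_{t,j},y;U)$ coordinatewise preserves that exchangeability. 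The union bound itself is routine.
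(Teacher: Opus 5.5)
Your proposal is correct and follows the paper's argument: conditional on $(X_t,U)$ the set $F_t$ and the denominator $m_t$ are fixed by A0.3, \cref{thm:plusone-superuniform} bounds each $\PP(p_{t,j}\le\alpha_{\mathrm{sel}}/m_t\mid X_t,U)$ by $\alpha_{\mathrm{sel}}/m_t$, and a union bound over $F_t$ gives $\alpha_{\mathrm{sel}}$ without needing independence across features. Your closing exchangeability check is already covered by the theorem: the paper proves it in \cref{lem:identity-vs-random} by re-randomizing with an extra uniform permutation $\Pi_0$, which is the step your ``hence'' skips over, so you do not need to redo it for the corollary.
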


Combining \cref{thm:plusone-superuniform} with the Bonferroni union bound over
the $m_t$ Stage~A tests gives fixed-node Stage~A control under the complete
null. The result is local: it does not cover partial-null familywise error
control, $p$-values for selected features, or error control for fitted trees and
forests.

\section{Experimental design}
\label{sec:experiments}

The experiments ask whether rankings produced by CIT and CIF place predictive
features early enough for fixed downstream learners. For each training fold, a
method produces a feature ranking, and fixed downstream learners evaluate the
selected features on the held-out fold. We then summarize overall ranks, how
often CIF performs well across datasets, high-dimensional behavior, recovery on
synthetic data with known informative features, CIF ranking ablations, and
feature sampling in forests.

\subsection{Benchmark setup}
\label{sec:experiments-protocol}

The benchmark uses 5 random seeds, with 5-fold cross-validation within each
seed. Classification fits logistic regression (LR), support vector machine
(SVM), and k-nearest neighbors (KNN) as downstream models and reports balanced
accuracy, defined as mean recall across classes. Regression fits ridge
regression, support vector regression (SVR), and KNN and reports $R^2$. The main
tables use the standard values $k \in \{5,10,25,50,100\}$.
Downstream learner settings are fixed. Logistic regression uses balanced class
weights and a 1,000-iteration limit; SVM uses an RBF kernel with unit
regularization, $\gamma=\texttt{scale}$, and balanced class weights; KNN uses
5 distance-weighted neighbors; ridge uses unit penalty strength; and SVR uses an
RBF kernel with unit regularization, $\epsilon=0.1$, and $\gamma=\texttt{scale}$.

For each seed and fold, feature-standardization parameters are estimated on the
training fold and then applied to the held-out fold. The ranking method uses only
the standardized training fold and produces the ordered feature list described in
\cref{sec:rank-construction}. For each $k$, we take the top-$k$ features from
that ranking, refit standardization on the selected training columns, apply the
same transform to the selected held-out columns, fit the downstream model on the
selected training data, and evaluate it on the held-out fold.

The downstream evaluation also records performance using the full feature set,
$k=p$, for every dataset. For datasets with more than 100 features, the separate
high-$p$ analysis adds fixed values $k \in \{150,200,300,500,750,1{,}000\}$
and fractional values
$\lceil 0.25p\rceil,\lceil 0.5p\rceil,\lceil 0.75p\rceil$, keeping only values
no larger than $p$. The main tables average only the standard $k$ values.

\subsection{Benchmark comparisons}
\label{sec:experiments-analysis-plan}

The benchmark averages scores in three steps. First, for each
$(\text{dataset}, \text{downstream model}, k)$ combination, scores are averaged
over folds and seeds. Second, each dataset score averages those fold-and-seed
means over downstream models and over the standard $k$ values available for the
methods in that comparison. Third, the tables average dataset scores over
datasets.

The real-data classification benchmark contains 23 datasets. Comparisons
involving partykit ctree and cforest use the 22-dataset subset that excludes
\nolinkurl{dexter}; comparisons among methods with \nolinkurl{dexter} results
use all 23 datasets. The complete-case classification analysis, used for
bootstrap intervals and omnibus rank testing, contains 14 datasets with all
17 methods, all three downstream learners, and all five standard values of $k$.
Regression contains 8 datasets; the analogous rank summary uses the
6 complete-case datasets with all methods, downstream learners, and standard
$k$ values available.

By-$k$ results are reported separately for $k=5,10,25,50,100$. The corresponding
classification dataset counts are $22, 21, 15, 15, 14$; the regression counts
are $8, 8, 7, 7, 6$. Pairwise CIF comparisons stratified by $k$ use the dataset
counts reported with the corresponding rows. The 14-dataset classification and
6-dataset regression summaries are complete-case analyses, retaining only
datasets with every method, downstream learner, and standard $k$ value.

\subsection{Datasets}
\label{sec:experiments-datasets}

The real-data benchmark includes 23 classification datasets and 8 regression
datasets. The separate high-$p$ analysis restricts attention to the
15 classification datasets and 6 regression datasets with more than 100
features. The full inventories of the real and synthetic datasets are listed in
\Cref{app:dataset-inventory}, which also summarizes the synthetic designs and
their parameter choices.

Feature recovery uses 8 synthetic classification datasets and 8 synthetic
regression datasets with known informative features and controlled noise
structure. A separate study of feature sampling uses $n=250$ and
varies $p \in \{100, 500, 1{,}000\}$ and
$n_{\mathrm{informative}} \in \{1,2,5,10\}$. That study compares CIT, a CART
decision tree (DT), a random threshold tree (RT), CIF, CIF-all, random forests
(RF), and ExtraTrees (ET), using 1,000 trees for forest methods. CIF-all denotes
the CIF variant that evaluates all nonconstant features at each split.

\subsection{Compared methods}
\label{sec:experiments-methods}

The benchmark includes three types of feature selectors: permutation-test
filters, embedded tree or ensemble methods, and wrapper baselines.
Permutation-test filters compute univariate dependence statistics and rank
features by their permutation $p$-values: multiple correlation (MC) and
randomized dependence coefficient (RDC) for classification, and Pearson
correlation (PC), distance correlation (DC), and RDC for regression. The RDC
filter keeps the empirical-CDF random feature map from
\citet{LopezPaz2013RDC} but replaces the canonical-correlation solve with the
maximum absolute pairwise correlation across projected columns. Distance
correlation follows \citet{Szekely2007DistanceCorrelation}.

Embedded methods rank features using tree or ensemble importance scores. These
methods are CART decision trees (DT), randomized trees (RT), CIT, CIF, random
forests (RF) \citep{Breiman2001RandomForests}, ExtraTrees
\citep{Geurts2006ExtraTrees}, XGBoost \citep{ChenGuestrin2016XGBoost},
LightGBM \citep{Ke2017LightGBM}, CatBoost
\citep{Prokhorenkova2018CatBoost}, and the partykit reference implementations
ctree and cforest
\citep{RPartykitPackage,Hothorn2006UnbiasedRecursivePartitioning,HothornZeileis2015Partykit}.
Wrapper baselines score features through auxiliary random-forest fits: Boruta
\citep{Kursa2010Boruta}, permutation importance (PI)
\citep{Breiman2001RandomForests,Strobl2007BiasRFVariableImportance},
conditional permutation importance (CPI)
\citep{DebeerStrobl2020ConditionalPermutationImportanceRevisited}, and
random-forest recursive feature elimination (RF-RFE)
\citep{Guyon2002RFE,Pedregosa2011ScikitLearn}.

In this benchmark, CPI uses stratified conditional permutation with
correlation-based strata. For each target feature, it trains a 100-tree random
forest on a training split and scores a 20\% validation split. It then
identifies validation-set features with absolute correlation at least 0.5 with
the target feature, forms five-bin percentile strata from those conditioning
features, and permutes the target feature within strata. If no conditioning
feature meets the correlation threshold, the target feature is permuted globally.

Ranking rules use each method's natural score direction. Embedded tree and
ensemble baselines rank features by decreasing fitted importance: DT, RT, RF,
ExtraTrees, XGBoost, LightGBM, and CatBoost. Permutation-test filters rank by
increasing $p$-value, with dependence score as the secondary key. The ctree
reference ranks features by split-use counts. For cforest, we use partykit's
unconditional variable-importance scores with one permutation; CIF uses
split-importance rankings from accepted conditional-inference splits. The
cforest results therefore refer to this benchmarked partykit importance setting.
Boruta and recursive feature elimination rank by their fitted rank variables,
with smaller ranks first. PI and CPI rank by decreasing importance; the CPI row
uses the stratified conditional-permutation procedure described above. Ties are
broken by deterministic, method-specific order for reproducibility.

\subsection{Configuration selection}
\label{sec:experiments-repro}

Several methods, including CIT, CIF, XGBoost, LightGBM, ctree, and cforest, were
evaluated under multiple configurations. Permutation-test filters used one configuration per
statistic. For each task, the paper reports one configuration per method,
selected by mean score over real datasets, downstream models, and the standard
values $k \in \{5,10,25,50,100\}$. Methods with only one configuration are
treated as fixed. The selected configuration is then held fixed across datasets for
that task. Because configuration selection and reporting use the same real-data
benchmark, the reported scores, intervals, and rank summaries are descriptive.
As a sensitivity check, we repeat configuration selection in leave-one-dataset-out
(LODO) form for both tasks. In classification, each of the 14 complete-case
datasets is evaluated using configurations chosen from the other 13 datasets. In
regression, each of the 6 complete-case datasets is evaluated using configurations
chosen from the other 5 datasets. The downstream models, $k$ values, and
averaging within each dataset are unchanged.

For the synthetic analysis, configurations are selected separately from the real-data
benchmark. Each method is assigned the configuration with the highest average recovery
of the true informative features over $k = 5, 10, 25, 50,$ and $100$. The
benchmarked CIT and CIF configurations use adaptive sequential stopping, Bonferroni
correction, feature muting, feature scanning, threshold scanning, and 256-bin
histogram thresholding. The benchmarked CIT and CIF variants differ in selector
and honesty; CIF is run as a bootstrap forest.
Each CIF runtime ablation changes one setting.

To isolate which parts of the CIF ranking matter, we run a ranking ablation on
the real-data benchmark datasets. The ablation holds the selected CIF
configuration fixed and changes one component at a time: replacing fitted split
importance with split counts, disabling feature muting, disabling bootstrap
sampling, or reducing the forest to one tree. Each variant is evaluated with the
same downstream learners, folds, seeds, and standard top-$k$ values as the main
benchmark.

\section{Results}
\label{sec:results}

\subsection{Real-data benchmark}
\label{sec:experiments-real}

\Cref{tab:method-main-comparison} reports the main real-data comparison.
Classification uses the 22 datasets with ctree and cforest results; the full
classification collection has 23 datasets, with \nolinkurl{dexter} reserved for
analyses that compare methods available on that dataset. Regression uses all 8
datasets. Within each dataset, scores are first averaged over downstream
learners and the standard $k$ values shared by the compared methods, then
converted to ranks. Lower mean rank is better.

\begin{table}[H]
  \centering
  \fontsize{8}{8.6}\selectfont
  \setlength{\tabcolsep}{5pt}
  \renewcommand{\arraystretch}{0.88}
  \begin{tabular}{@{}l S[table-format=2.2] @{\hspace{2.5em}} l S[table-format=2.2]@{}}
    \toprule
    \multicolumn{2}{c}{Classification} &
    \multicolumn{2}{c}{Regression} \\
    \cmidrule(lr){1-2}\cmidrule(l){3-4}
    \tablehead{Method} & {Mean rank} & \tablehead{Method} & {Mean rank} \\
    \midrule
    LightGBM & 5.17 & ExtraTrees & 6.56 \\
    XGBoost & 5.58 & CatBoost & 6.97 \\
    CatBoost & 5.85 & CIF & 7.07 \\
    CIF & 6.08 & RF & 8.09 \\
    RF & 6.27 & RT & 8.37 \\
    RF-RFE & 6.98 & DT & 8.68 \\
    ExtraTrees & 7.45 & LightGBM & 8.76 \\
    DT & 7.67 & RDC filter & 8.83 \\
    RT & 10.05 & CIT & 9.19 \\
    CIT & 10.34 & PC filter & 9.35 \\
    Boruta & 10.51 & DC filter & 9.77 \\
    MC filter & 10.65 & XGBoost & 9.96 \\
    PI & 10.76 & Boruta & 10.00 \\
    cforest & 11.36 & PI & 10.19 \\
    RDC filter & 12.29 & RF-RFE & 10.41 \\
    ctree & 12.86 & cforest & 12.20 \\
    CPI & 13.13 & ctree & 12.99 \\
    \multicolumn{2}{c}{} & CPI & 13.61 \\
    \bottomrule
  \end{tabular}
  \caption{Main real-data comparison. Ranks are computed within each dataset
  after scores are averaged over downstream learners and the standard $k$ values
  shared by the compared methods, then averaged across datasets. Classification
  uses the 22 datasets with ctree and cforest results; regression uses all 8
  datasets.}
  \label{tab:method-main-comparison}
\end{table}

With selected configurations fixed, CIF ranks 4th of 17 classification methods
and 3rd of 18 regression methods.
\Cref{tab:conditional-inference-comparisons} compares CIF directly with ctree,
cforest, and CIT.
\Cref{app:benchmark-stability} gives the learner-by-$k$ tables
(\cref{tab:classification-learner-k-sensitivity,tab:regression-learner-k-sensitivity}),
pairwise CIF-comparison heatmaps
(\cref{fig:benchmark-pairwise-sensitivity,fig:regression-benchmark-pairwise-sensitivity}),
and seed-sensitivity tables
(\cref{tab:classification-seed-sensitivity,tab:regression-seed-sensitivity});
each analysis reports its dataset count.

\begin{table}[H]
  \centering
  \scriptsize
  \setlength{\tabcolsep}{5pt}
  \renewcommand{\arraystretch}{0.92}
  \begin{tabular}{@{}llS[table-format=2.0]S[table-format=+1.3]c@{}}
    \toprule
    \tablehead{Task} & \tablehead{Compared with} & {Datasets} & {Mean diff.} & W--L \\
    \midrule
    Classification & ctree & 22 & +0.088 & \winloss{22}{0} \\
    Classification & cforest & 22 & +0.072 & \winloss{19}{3} \\
    Classification & CIT & 23 & +0.032 & \winloss{22}{1} \\
    Regression & ctree & 8 & +0.226 & \winloss{7}{1} \\
    Regression & cforest & 8 & +0.721 & \winloss{7}{1} \\
    Regression & CIT & 8 & +0.360 & \winloss{6}{2} \\
    \bottomrule
  \end{tabular}
  \caption{Direct CIF comparisons against ctree, cforest, and CIT. Mean
  differences are CIF minus the compared method after averaging within each
  dataset over supported downstream learners and standard values of $k$;
  classification uses balanced accuracy and regression uses $R^2$. W--L reports
  wins and losses over datasets where both methods were run.}
  \label{tab:conditional-inference-comparisons}
\end{table}

\FloatBarrier

\Cref{tab:conditional-inference-comparisons} shows CIF ahead on average in all
six direct pairings. On the complete-case datasets, CIF is also ahead of ctree,
cforest, and CIT in both tasks. The 95\% bootstrap CIs for mean
CIF-minus-comparator differences are as follows. Classification (14 datasets):
ctree [+0.068, +0.154], cforest [+0.067, +0.160], CIT [+0.023, +0.060]. Regression
(6 datasets): ctree [+0.113, +0.529], cforest [+0.040, +2.570], CIT
[-0.007, +1.224]. The regression interval for CIF versus CIT crosses zero; the
other complete-case intervals are positive. The regression intervals are wider
because fewer datasets are available and some comparator mean $R^2$ values are
negative.

\paragraph{CIF ablations.}
\Cref{tab:cif-ranking-ablation} compares the selected CIF configuration with
four targeted changes: split-count ranking, disabling bootstrap sampling,
disabling feature muting, and using one tree.

\begin{table}[H]
  \centering
  \scriptsize
  \setlength{\tabcolsep}{4pt}
  \renewcommand{\arraystretch}{1.08}
  \begin{tabular}{@{}llS[table-format=+1.3]cc@{}}
    \toprule
    \tablehead{Task} & \tablehead{CIF change} & {Mean diff.} & 95\% CI & W--L \\
    \midrule
    Classification & Split-count ranking & -0.003 & [-0.007, +0.001] & 10--13 \\
    Classification & Disable bootstrap & +0.001 & [-0.003, +0.004] & 10--12 \\
    Classification & Disable feature muting & +0.000 & [-0.001, +0.002] & 13--8 \\
    Classification & Use one tree & -0.061 & [-0.091, -0.035] & 1--22 \\
    \addlinespace
    Regression & Split-count ranking & -0.022 & [-0.060, +0.000] & 3--5 \\
    Regression & Disable bootstrap & -0.208 & [-0.604, -0.004] & 3--5 \\
    Regression & Disable feature muting & +0.000 & [-0.011, +0.009] & 5--3 \\
    Regression & Use one tree & -0.552 & [-1.328, -0.073] & 0--8 \\
    \bottomrule
  \end{tabular}
  \caption{CIF ablations on the real-data benchmark. Mean differences compare
  each changed variant with the selected CIF configuration using
  split-importance ranking. Scores are averaged within each dataset over
  downstream learners and supported standard values of $k$. Classification uses
  balanced accuracy; regression uses
  $R^2$. The confidence intervals are percentile bootstrap intervals over
  dataset-level differences from 20{,}000 replicates. Positive values favor the
  changed CIF variant. W--L counts datasets where the changed variant is higher
  or lower than the reference; exact ties are omitted. The classification rows
  use 23 datasets and the regression rows use 8 datasets.}
  \label{tab:cif-ranking-ablation}
\end{table}

\FloatBarrier

Reducing CIF to one tree has the largest effect: $-0.061$ balanced accuracy in
classification and $-0.552$ $R^2$ in regression. Replacing fitted split
importance with split counts changes the
mean score by $-0.003$ in classification and $-0.022$ in regression, with
intervals that include zero. Disabling feature muting changes both tasks by
$+0.000$, while disabling bootstrap sampling changes classification by $+0.001$
and regression by $-0.208$.

The complete-case Friedman tests use dataset-level method means.
Classification: $\chi^2(16)=141.32$, $p<0.001$, Kendall's $W=0.63$.
Regression: $\chi^2(17)=33.79$, $p=0.0089$, Kendall's $W=0.33$. The regression
panel has 6 datasets, so the $p$-value should be read with the effect sizes,
intervals, and LODO results. These intervals and rank tests summarize the
benchmark-selected configurations. The LODO analysis below checks whether those
configurations are stable under one-dataset holdout.

In the 14-dataset classification panel, LODO ranks CIF fifth of 17 on average
(mean rank 5.64), and CIF reselects the main-benchmark configuration in all 14
holdouts. In the 6-dataset regression panel, LODO ties CIF with CatBoost for
second on average (mean rank 5.50), behind ExtraTrees; CIF reselects the global
regression configuration in 5 of 6 holdouts.
\Cref{tab:cif-ci-comparisons,tab:lodo-config-sensitivity}
give the confidence intervals and LODO details.

\Cref{tab:cif-breadth-summary} reports how often CIF appears in the top half or
top 3 and how often it beats tree baselines.

\begin{table}[H]
  \centering
  \scriptsize
  \setlength{\tabcolsep}{6pt}
  \renewcommand{\arraystretch}{1.0}
  \begin{tabular}{@{}lcc@{}}
    \toprule
    \tablehead{Summary} & Classification & Regression \\
    \midrule
    Main rank & \countfrac{4}{17} & \countfrac{3}{18} \\
    Top half & \countfrac{21}{22} & \countfrac{6}{8} \\
    Top 3 & \countfrac{5}{22} & \countfrac{2}{8} \\
    Win vs ctree & \countfrac{22}{22} & \countfrac{7}{8} \\
    Win vs cforest & \countfrac{19}{22} & \countfrac{7}{8} \\
    Win vs CIT & \countfrac{22}{23} & \countfrac{6}{8} \\
    Win vs DT & \countfrac{14}{23} & \countfrac{6}{8} \\
    Win vs RT & \countfrac{21}{23} & \countfrac{5}{8} \\
    \bottomrule
  \end{tabular}
  \caption{Dataset breadth for the selected CIF configuration. Top half and
  Top 3 counts use the datasets from \cref{tab:method-main-comparison}. Direct
  win rows use datasets where both methods were run.}
  \label{tab:cif-breadth-summary}
\end{table}

In classification, CIF is in the top half on 21 of the 22 datasets with ctree
and cforest results, and in the top 3 on 5 of 22. In regression, CIF is in the
top half on 6 of 8 datasets and in the top 3 on 2 of 8. Against the greedy
single-tree baselines, CIF beats DT on 14/23 classification datasets and 6/8
regression datasets, and beats RT on 21/23 classification datasets. The RT
regression comparison is close: CIF is ahead on 5/8 datasets, with a mean
difference near zero.

\Cref{fig:k-trajectory,fig:regression-k-trajectory} report mean rank separately
by $k$. In classification, CIF's mean rank is 6.2 at $k=5$, 7.4 at $k=10$,
5.5 at $k=50$, and 5.0 at $k=100$.
In regression, CIF's mean rank is 8.9 at $k=5$, 5.6 at $k=50$, and 7.3 at
$k=100$. These by-$k$ results add context to
\Cref{tab:method-main-comparison}: CIF is strongest at larger displayed $k$
values in classification and at $k=50$ in regression. The aggregate ctree and
cforest comparisons use 22 classification datasets. For $k=5,10,25,50,100$,
the classification columns use 22, 21, 15, 15, and 14 datasets, respectively;
the regression columns use 8, 8, 7, 7, and 6 datasets, respectively.
Complete-case analyses use 14 classification datasets and 6 regression datasets.

\FloatBarrier

\begin{figure}[H]
  \centering
  \includegraphics[width=0.86\linewidth]{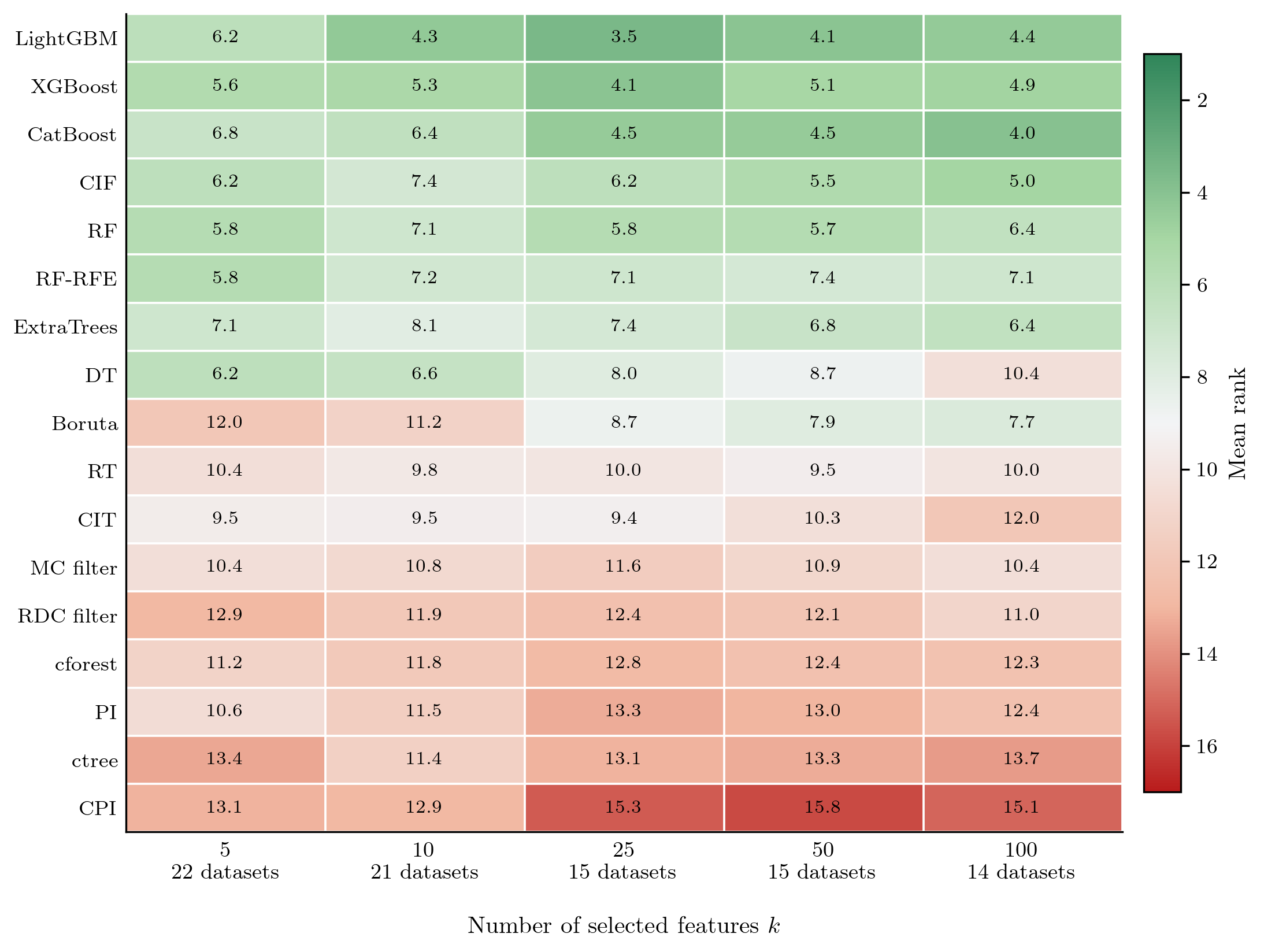}
  \caption{Classification mean rank by number of selected features. Rows show
  all 17 feature selection methods in the classification benchmark, averaged
  across downstream models at each value of $k$. Lower ranks are better.
  The columns for $k=5,10,25,50,100$ use 22, 21, 15, 15, and 14 datasets,
  respectively.}
  \label{fig:k-trajectory}
\end{figure}

\begin{figure}[H]
  \centering
  \includegraphics[width=0.86\linewidth]{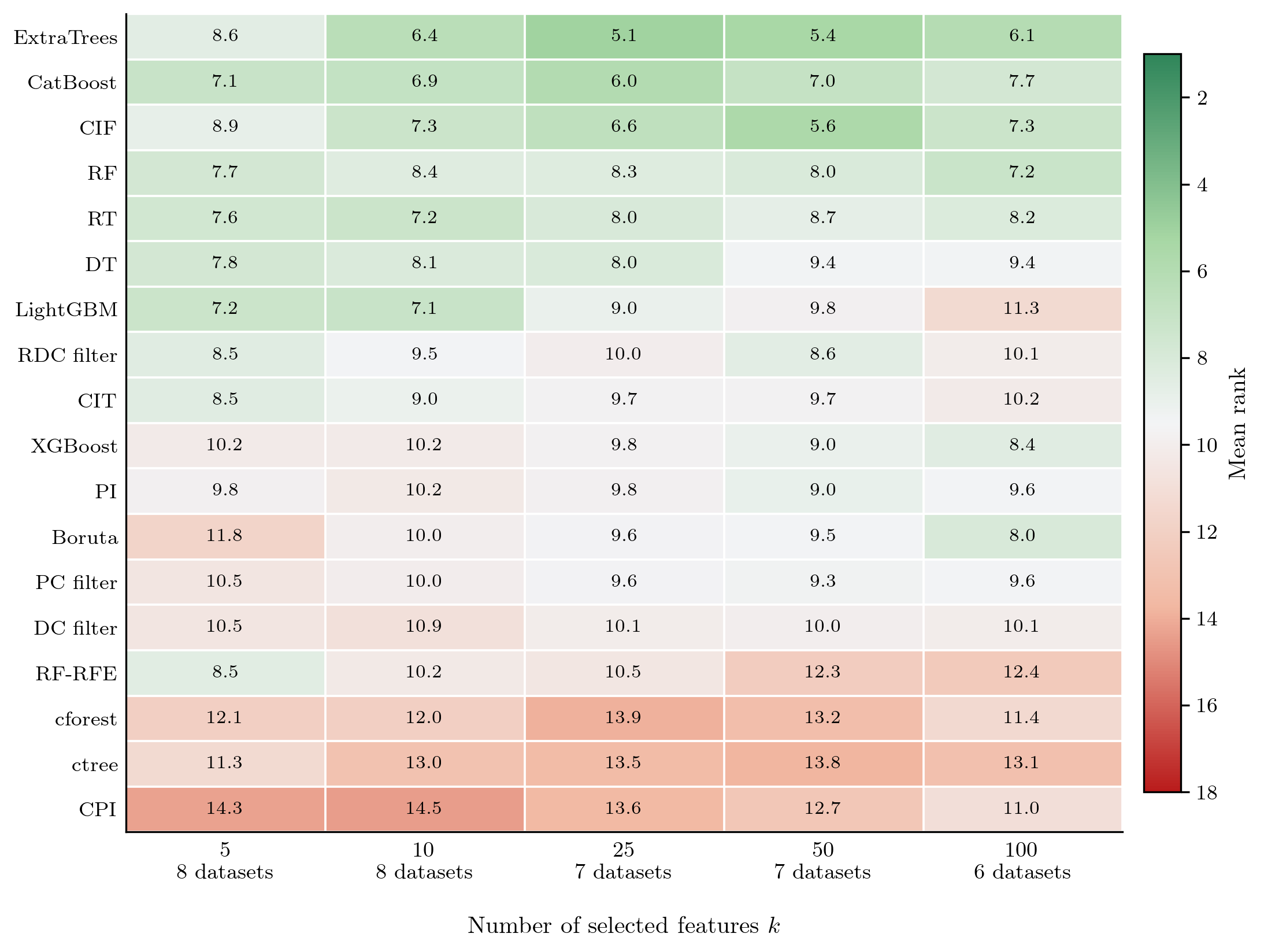}
  \caption{Regression mean rank by number of selected features. Rows show all
  18 feature selection methods in the regression benchmark, averaged across
  downstream models at each value of $k$. Lower ranks are better. The columns
  for $k=5,10,25,50,100$ use 8, 8, 7, 7, and 6 datasets, respectively.}
  \label{fig:regression-k-trajectory}
\end{figure}

\FloatBarrier

\subsection{Runtime ablations}
\label{sec:results-practical}

The runtime ablations identify which benchmark settings have the largest effect
on fitting time and report how much the ranking scores move when those settings
change. Each row changes one setting and reports elapsed wall-clock fit time
relative to the selected configuration for the same estimator. Ratios above 1 are
slower than that reference, and ratios below 1 are faster. CIT and CIF are shown
separately because the CIT
run records fit time and synthetic top-10 feature recovery, while the CIF runs
also record downstream score changes. The main benchmark remains the primary
performance comparison.
\Cref{app:complexity-details} links each ablation to the affected work terms.

\begin{table}[H]
  \centering
  \footnotesize
  \setlength{\tabcolsep}{6pt}
  \begin{tabular}{@{}lcc@{}}
    \toprule
    \tablehead{CIT change} & Runtime ratio & \shortstack{Largest top-10\\F1 change} \\
    \midrule
    Disable adaptive stopping & 0.18--1.14$\times$ & $\le 0.026$ \\
    Exact threshold search & 0.98--2.51$\times$ & $\le 0.008$ \\
    Disable feature scanning & 0.69--1.46$\times$ & $\le 0.017$ \\
    \bottomrule
  \end{tabular}
  \caption{CIT runtime ablations. Runtime ratios are paired by seed and dataset
  against the selected CIT configuration and summarized across the real and synthetic
  classification and regression runs. Top-10 F1 changes are absolute changes on
  synthetic datasets with known informative features, where top-10 F1 is the
  harmonic mean of precision and recall for recovering informative features among
  the first 10 ranked features. Numbers are rounded for
  display. The final column is the largest absolute top-10 F1 change across the
  synthetic task groups. The dataset-type counts are 7 real classification,
  8 synthetic classification, 2 real regression, and 6 synthetic regression.}
  \label{tab:cit-runtime-hyperparams}
\end{table}

\paragraph{CIT runtime ablations.}
\Cref{tab:cit-runtime-hyperparams} shows that CIT timing effects vary by data
regime. Disabling adaptive stopping reduces runtime in the synthetic runs and has
less consistent timing on real datasets. For a single fitted tree, changing this
option can also change the realized sequence of features, thresholds, and nodes
evaluated. These rows focus on timing and synthetic feature-recovery changes.

\begin{table}[H]
  \centering
  \footnotesize
  \setlength{\tabcolsep}{6pt}
  \begin{tabular}{@{}lccc@{}}
    \toprule
    \tablehead{CIF change} & Runtime ratio & \shortstack{Downstream\\score change} & \shortstack{Synthetic feature\\recovery change} \\
    \midrule
    Disable adaptive stopping & 4.0--8.4$\times$ & $\le 0.006$ & $\le 0.006$ \\
    Exact threshold search & 1.9--10.8$\times$ & $\le 0.011$ & $\le 0.017$ \\
    Disable feature scanning & 0.89--1.83$\times$ & $\le 0.006$ & $\le 0.010$ \\
    \bottomrule
  \end{tabular}
  \caption{CIF runtime ablations with score changes. Runtime ratios are relative to
  the selected CIF configuration. Adaptive stopping and feature scanning use median dataset-level
  ratios. The exact-threshold row compares group mean elapsed time with the
  histogram-256 reference. The score columns give the largest absolute downstream
  score change and synthetic feature-recovery change across the task groups
  used for each row. Inequality entries are rounded upward to the nearest 0.001.
  The task-group coverage matches \cref{tab:cit-runtime-hyperparams}.}
  \label{tab:cif-runtime-hyperparams}
\end{table}

\paragraph{CIF runtime ablations.}
\Cref{tab:cif-runtime-hyperparams} shows consistent slowdowns for two CIF
changes. Disabling adaptive stopping makes fits 4.0--8.4$\times$ slower by
median dataset-level ratios. In the separate threshold-search ablation, exact
threshold search is 1.9--10.8$\times$ slower than histogram-256 by ratios of
group mean elapsed times. Downstream score changes and synthetic
feature-recovery changes are at most 0.011 and 0.017. The feature-scanning effect
is smaller and less consistent once adaptive stopping is enabled. Removing
Bonferroni correction is faster for both CIT and CIF, but it changes the
statistical rule and yields trees with more selected features; the selected
setting keeps Bonferroni correction.
\FloatBarrier

\section{High-dimensional and synthetic analyses}
\label{sec:boundary}

\paragraph{High-dimensional real datasets.}
These analyses study where CIF's scores peak as $k$ grows, how feature
recovery on synthetic datasets changes across values of $k$, and how feature
sampling inside the forest affects how often splits use informative
features.
Across the 15 classification datasets with more than 100 features, CIF first
reaches its best score at an intermediate value of $k$ ($100 < k < p$) in 30 of 45
dataset-learner combinations. It first reaches its best score at the full
feature set ($k=p$) in only 5 of 45 combinations.
\Cref{fig:high-p-boundary} shows this pattern by downstream learner; regression
is summarized next. Classification scores most often peak at
intermediate $k$: 11 of 15 LR datasets, 9 of 15 SVM datasets, and 10 of 15 KNN
datasets. The full feature set is mainly an LR exception. Moving from $k{=}100$
to $k{=}p$ gives a mean $k{=}p$ minus $k{=}100$ score change of $+0.022$ for LR,
and $k=p$ matches the best observed LR score in 6 of 15 datasets. For SVM and
KNN, the corresponding mean changes are $-0.058$ and $-0.075$, and $k=p$ is
best in only 1 of 15 datasets for each learner.

In the 6 high-$p$ regression datasets, full-feature evaluation is rarely best.
Across the 18 dataset-learner combinations, 11 peak at or below $k{=}100$, only
2 peak at $k=p$, and the mean $k{=}p$ minus $k{=}100$ score change is $-0.308$.
The main full-feature exception remains the classification LR result in
\Cref{fig:high-p-boundary}.

\begin{figure}[H]
  \centering
  \includegraphics[width=0.84\linewidth]{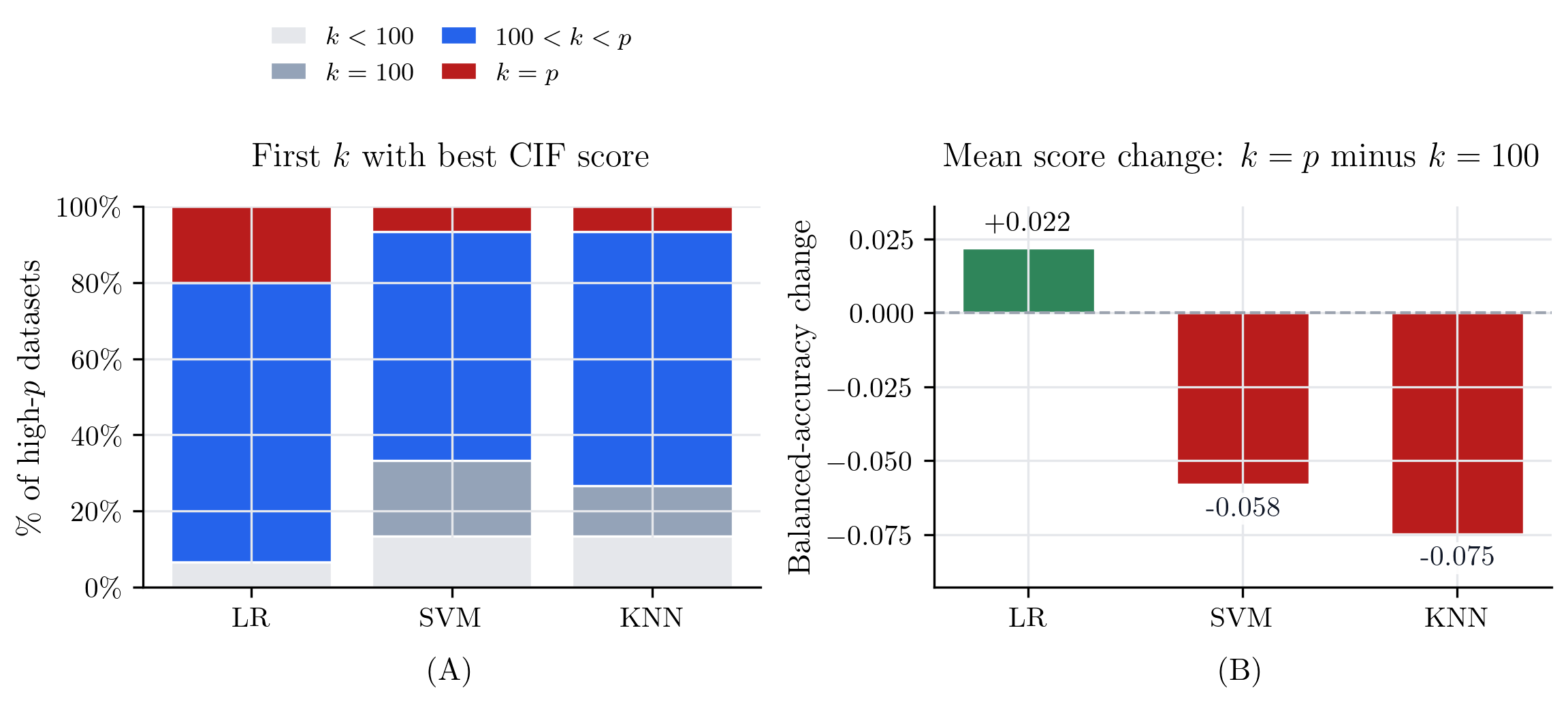}
  \caption{Where CIF's classification scores peak as $k$ increases, by downstream learner.
  (A) shows, for each learner, the percentage of 15 datasets where the first
  evaluated $k$ attaining the best observed CIF score falls below 100, at 100,
  between 100 and $p$, or at $k{=}p$. (B) shows the mean balanced accuracy
  change from $k{=}100$ to $k{=}p$.}
  \label{fig:high-p-boundary}
\end{figure}

\paragraph{Synthetic feature recovery.}
Across synthetic classification datasets, 34\% of the features in CIF's top-$k$
list are truly informative on average over the evaluated $k$ values, ranked
8th of 17, and its top-1 hit rate is 64\%, ranked 12th of 17. Across synthetic
regression datasets, the corresponding values are 39\%, ranked 10th of 18, and
81\%, ranked 15th of 18. In both tasks, CIF recovers true features within the
top-$k$ list more reliably than it places one first.
\Cref{tab:synthetic-head} summarizes this contrast, and
\Cref{fig:synthetic-topk-curves} shows recovery as $k$ grows. The percentage of
selected features that are informative falls as $k$ grows. On
redundant-feature synthetic designs, most selected features correspond to true
signals or redundant proxies: 83\% for classification and 84\% for regression.

\begin{table}[!htbp]
  \centering
  \footnotesize
  \setlength{\tabcolsep}{6pt}
  \begin{tabular}{@{}lcccc@{}}
    \toprule
    & \multicolumn{2}{c}{Classification} & \multicolumn{2}{c}{Regression} \\
    \cmidrule(lr){2-3}\cmidrule(l){4-5}
    \tablehead{Metric} & {Value} & Rank & {Value} & Rank \\
    \midrule
    Selected features that are informative & 34\% & \countfrac{8}{17} & 39\% & \countfrac{10}{18} \\
    Top-1 hit rate & 64\% & \countfrac{12}{17} & 81\% & \countfrac{15}{18} \\
    Selected features that are true or redundant & 83\% & \countfrac{5}{17} & 84\% & \countfrac{2.5}{18} \\
    \bottomrule
  \end{tabular}
  \caption{Synthetic feature recovery for CIF. The first row averages, over
  $k = 5, 10, 25, 50$, and $100$, the percentage of selected features that are
  truly informative. The top-1 hit rate is computed at $k=1$. The final row is
  computed and ranked only on the redundant-feature design for each task and
  counts true informative plus redundant features. Fractional ranks indicate
  averaged ties.}
  \label{tab:synthetic-head}
\end{table}

\begin{figure}[H]
  \centering
  \includegraphics[width=0.84\linewidth]{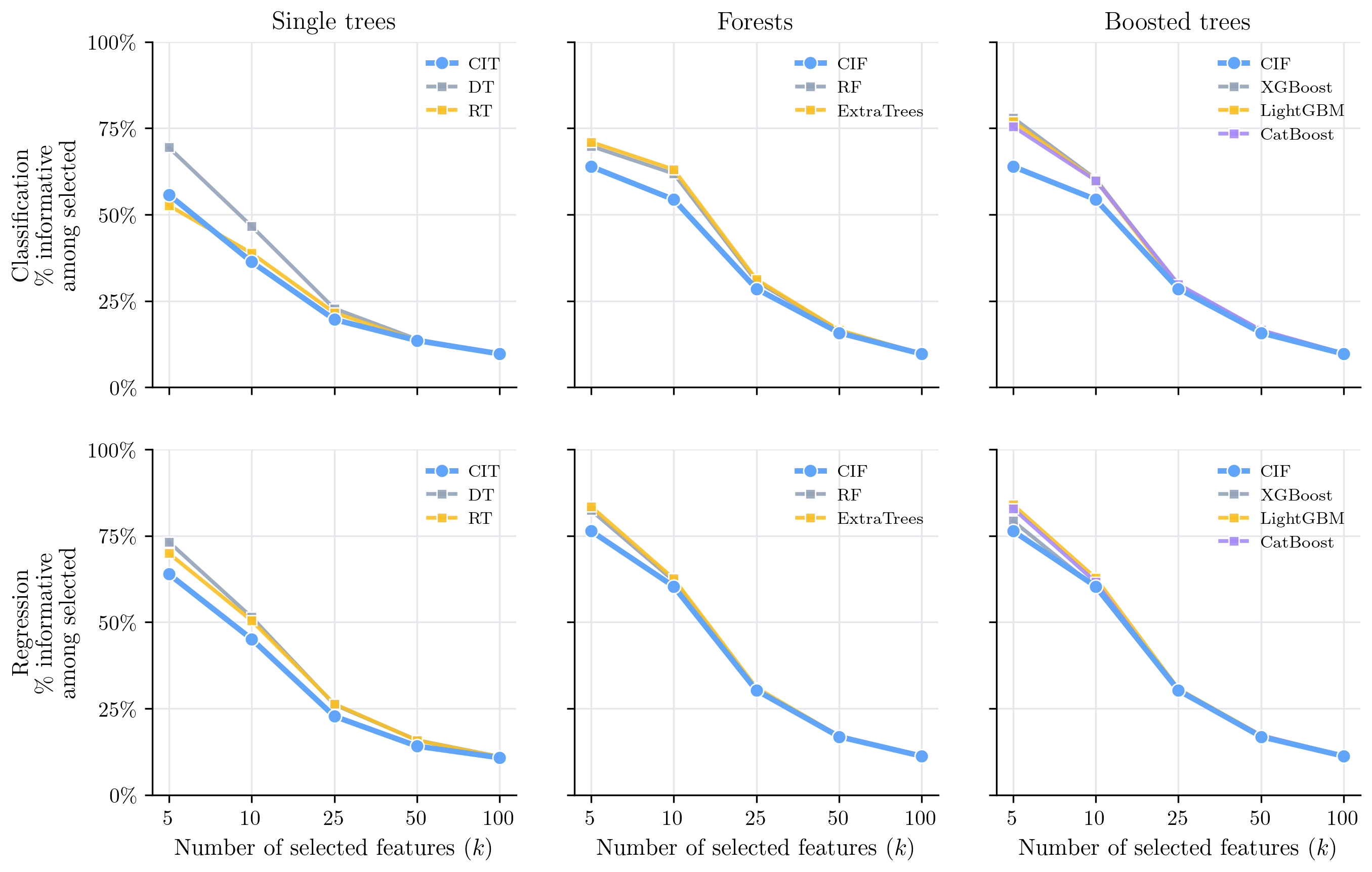}
  \caption{Synthetic top-$k$ recovery curves. Columns compare single trees,
  forests, and boosted trees; rows show classification and regression. Each
  value is the percentage of selected features that are truly informative.}
  \label{fig:synthetic-topk-curves}
\end{figure}

\paragraph{Feature sampling in forests.}
A sparse simulation asks whether forest feature sampling makes splits less
likely to use informative features in classification and regression settings.
\Cref{fig:candidate-forest-counts} shows one sparse classification
design comparing CIF with CIF-all. CIF samples a subset of features at each
split; CIF-all considers every nonconstant feature in Stage~A.
\Cref{app:candidate-availability} gives the classification and regression
results over $p \in \{100,500,1{,}000\}$.

In the single-tree simulations, CIT uses informative features in 100.0\% of
classification splits and 93.5\% of regression splits. In this fixed design,
informative features are usually selected when every nonconstant feature is
considered. In forests, sampling a subset of features reduces how often informative
features are used for splits. Across the classification forest
simulations with 1,000 trees, CIF uses informative features in 33.2\% of
splits, compared with 91.7\% for CIF-all, 8.4\% for RF, and 6.5\% for ET.
CIF also uses more distinct noise features than CIF-all, 265 versus 93
on average. For the $p{=}1{,}000$ classification setting with two informative
features, CIF uses informative features in 9.0\% of splits, while CIF-all uses
them in 100.0\%. In regression, CIF-all uses informative features more often
than CIF, 42.3\% versus 27.8\%, but uses more distinct noise features, 528.8
versus 367.7.
CIF-all isolates feature subsampling because feature muting and
the other tree settings are held fixed. In these sparse high-$p$ simulations,
splits use informative features most often when every nonconstant feature can
enter Stage~A. When the sampled feature subset rarely includes an informative
feature at a node, Stage~A selects from that subset.

\begin{figure}[!t]
  \centering
  \includegraphics[width=0.84\linewidth]{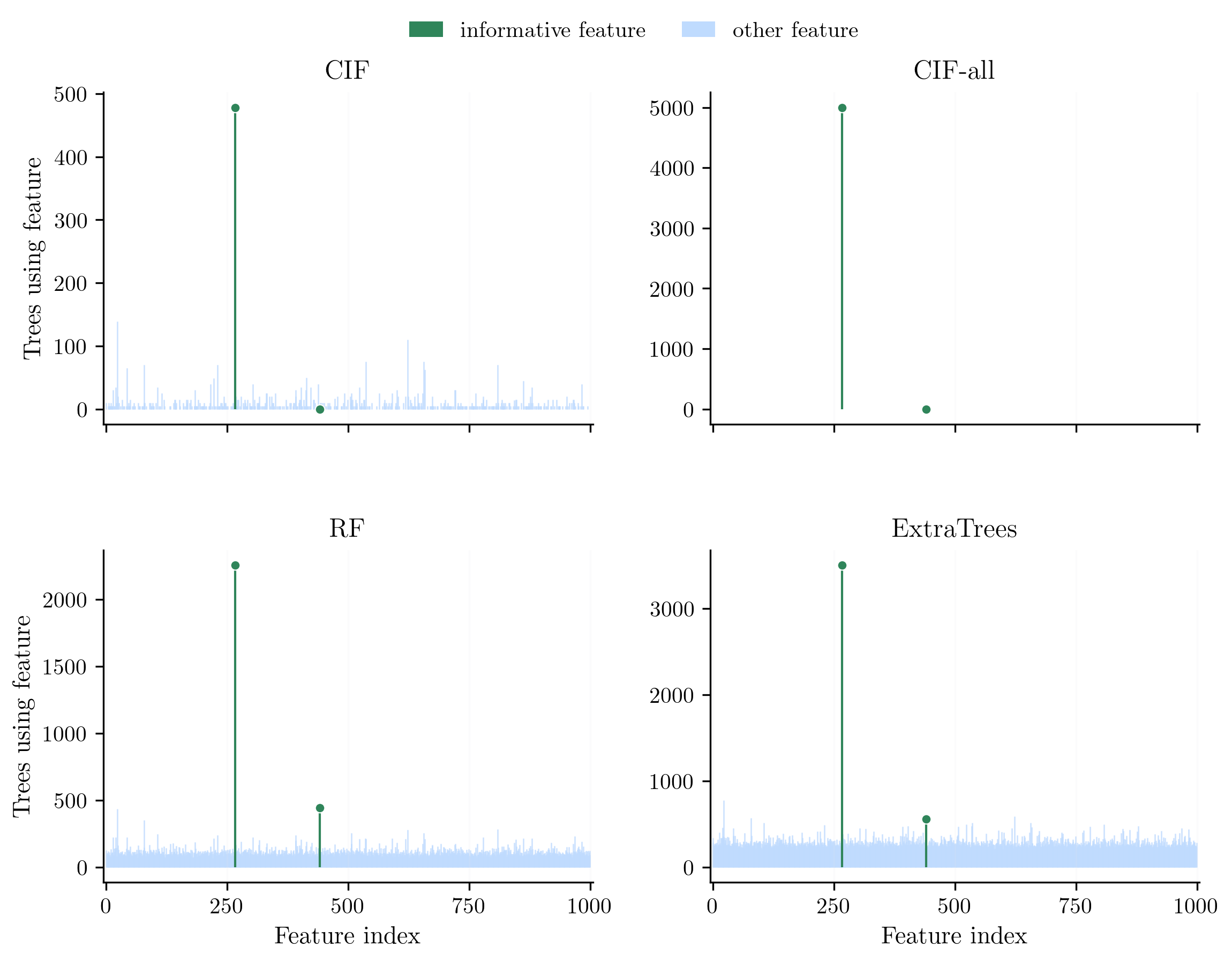}
  \caption{Tree-level feature use in one sparse classification forest design
  ($n{=}250$, $p{=}1{,}000$, two informative features, 1,000 trees). Counts are
  the number of trees using each feature in at least one split. CIF samples a
  subset of features; CIF-all evaluates all nonconstant
  features. Each subplot has its own
  y-axis range because counts differ across methods.}
  \label{fig:candidate-forest-counts}
\end{figure}

\FloatBarrier

\section{Discussion}
\label{sec:discussion}

In these experiments, CIF is best supported as a top-$k$ feature-selection
method for a downstream model. The benchmark asks whether a ranking places
predictive features early enough for fixed downstream learners. Classification
gives the clearest support; the smaller regression panel is positive but less
precise. The fixed-node Stage~A theorem provides the local guarantee; fitted
tree and forest rankings are assessed empirically.

\paragraph{Guidance.}
A CIF ranking should be judged through the downstream model that will use the
selected features. In practice, choose $k$ on held-out data over a small set of
values that includes the intended feature budget and, when feasible, the full
feature set as a sanity check. When prediction accuracy is the main target, tree-ensemble
baselines such as random forests, ExtraTrees, and boosted trees should remain in
the comparison set. When the target is variable screening or a ranked feature
list for later modeling, CIF should also be compared with ctree, cforest, and
forest-based screening methods such as Boruta, permutation or conditional
permutation importance, and RF-RFE
\citep{GenuerPoggiTuleauMalot2010VariableSelectionUsingRandomForests}.

The ablations show that adaptive stopping and histogram-256 thresholding account
for the largest measured CIF runtime changes, with small downstream score changes
in these experiments. Bonferroni correction is part of the Stage~A testing rule
and is not treated as a runtime shortcut. For ranking quality, forest size
matters more than the split-count and feature-muting variants tested here, so the
selected CIF configuration is the relevant ranking reference when ranking
quality is the priority.

In sparse high-$p$ settings, feature use should be monitored directly.
The simulations show that CIF feature sampling can leave informative features
out of many forest split decisions, especially in sparse classification designs.
Useful diagnostics include top-$k$ stability across seeds, the share of splits
using the final top-$k$ features or any prespecified positive control features,
the number of distinct low-ranked or known null features used in splits, and
sensitivity to larger sampled feature sets or to a small CIF-all run with fewer
seeds or trees.

\paragraph{Limitations.}
The fixed-node result is a reference guarantee for exhaustive fixed-$B$ Stage~A
testing at one node, with the tested features and permutation budget fixed
independently of the node responses. The benchmarked trees and forests add
adaptive stopping, feature and threshold ordering, feature muting, bounded threshold sets,
bootstrap samples, and forest aggregation, so their behavior is assessed through
the ranking benchmark.

The benchmark coverage is uneven across analyses. Some comparisons use broader
panels where both methods are available, while complete-case rank and interval
summaries use smaller panels that include every method. These summaries are
therefore descriptive rather than confirmatory.

The sampled-forest analysis isolates a sparse high-$p$ setting in which
forest feature sampling can limit how often splits use informative features:
larger sampled feature sets increase informative-feature use, but they also
change runtime and the mix of noise features used by splits. Track how often
splits use informative features alongside downstream performance and runtime
before changing the number of features sampled at each split.

\section{Conclusion}
\label{sec:conclusion}

This paper evaluates conditional-inference tree and forest rankings for top-$k$
downstream prediction and proves a fixed-node guarantee for Stage~A permutation
testing. The experiments support CIF as a top-$k$ feature-ranking method,
especially in classification: its rankings compare favorably with ctree,
cforest, and CIT, and they remain competitive among stronger tree-ensemble
baselines.

The ablations separate runtime and ranking effects. Adaptive stopping and
bounded threshold search avoid large runtime increases, while the one-tree
ablation has the largest ranking-quality loss. The high-$p$ and synthetic
analyses add two diagnostic cautions: sparse forests should be checked for
whether informative features are used in enough splits to enter the top-$k$
ranking, and CIF should not be treated as a first-feature discovery method when
the goal is to identify a single driver. Overall, these
results support CIF as a feature-ranking method for downstream prediction,
backed by a local Stage~A result and benchmark evidence.

\bibliographystyle{plainnat}
\bibliography{references}

\clearpage
\begin{appendices}
\crefalias{section}{appendix}
\crefalias{subsection}{appendix}
\section{Setup and assumptions}

\subsection{Fixed-node setup}

Let $(X_i, Y_i)_{i=1}^n$ be the training data. Fix a node $t$ with sample index
set $I_t$ and size $n_t:=|I_t|$. Write $X_t := (X_i)_{i\in I_t}$ and
$Y_t := (Y_i)_{i\in I_t}$ for the data restricted to that node. Stage~A tests
features $F_t$ with $m_t:=|F_t|$.

The fixed-node theorem analyzes the exhaustive fixed-$B$ Stage~A reference test.
Conditional on the node covariates and label-independent auxiliary randomness
$U$, $F_t$ and $B$ are fixed, and every feature in $F_t$ is tested with exactly
$B$ permutations.

\subsubsection*{Notation}
\cref{tab:fixed-node-notation} lists the notation used in the proof.
\begin{table}[H]
  \centering
  \small
  \setlength{\tabcolsep}{5pt}
  \begin{tabular}{@{}L{0.18\linewidth}L{0.76\linewidth}@{}}
    \toprule
    \tablehead{Symbol} & \tablehead{Meaning} \\
    \midrule
    $t$ & Node index, held fixed in the theorem. \\
    $I_t,\ n_t$ & Node sample index set and size $n_t:=|I_t|$. \\
    $X_t,\ Y_t$ & Covariates and labels restricted to node $t$. \\
    $U$ & Label-independent auxiliary randomness conditioned on in the proof; it excludes node responses, Monte Carlo permutation draws, permutation statistics, and $p$-values. \\
    $F_t,\ m_t$ & Stage~A features and size $m_t:=|F_t|$. \\
    $X_{t,j}$ & Values of feature $j$ on the node samples. \\
    $B$ & Number of random label permutations per tested feature in the fixed-$B$ reference test. \\
    $T^{\mathrm{sel}}_j$ & Stage~A selector statistic for feature $j$: $T^{\mathrm{sel}}_j(X_{t,j},Y_t;U)$. \\
    $T(\cdot,\cdot;U)$ & Generic scalar test statistic used in the exchangeability proof. \\
    $\Pi_b$ & Random label permutation used for Monte Carlo replicate $b$. \\
    $T_{\mathrm{obs}},T_1,\dots,T_B$ & Observed and permuted statistics for one fixed feature test. \\
    $R$ & Right-tail rank numerator $1+\sum_{b=1}^B \ind\{T_b \ge T_{\mathrm{obs}}\}$. \\
    $p_{\mathrm{MC}}^{\ge}$ & Right-tail $+1$ Monte Carlo permutation $p$-value for one fixed feature test. \\
    $p_{t,j}$ & Fixed-$B$ $+1$ Monte Carlo permutation $p$-value for feature $j$. \\
    $\alpha_{\mathrm{sel}}$ & Stage~A nominal level; the corollary tests at $\alpha_{\mathrm{sel}}/m_t$. \\
    \bottomrule
  \end{tabular}
  \caption{Notation used in the fixed-node Stage~A theorem and proof.}
  \label{tab:fixed-node-notation}
\end{table}

The guarantee applies only to fixed-node Stage~A. It does not cover Stage~B
threshold search, adaptive stopping, response-dependent feature muting,
response-selected internal nodes, bootstrap forests, honesty, or
$p$-values after feature selection or other post-selection steps; see
\cref{sec:method,app:methods,app:complexity-details}.

\subsection{Assumptions and theorem scope}
\label{app:assumptions}

\paragraph{A0.1 (Fixed node).}\label{assump:fixed-node}
Assumptions A0.1--A0.5 define the fixed-node reference setting used by
Theorem~\ref{thm:plusone-superuniform} and
Corollary~\ref{cor:stageA-global-null}.
The node $t$ and its sample index set $I_t$ are held fixed for the result. They
are not chosen using node responses or earlier response-dependent splits.

\paragraph{A0.2 (Complete nodewise permutation null).}\label{assump:exchangeability}
Under the complete null at node $t$, $Y_t$ is exchangeable conditional on
$(X_t,U)$: its conditional law is unchanged by permuting the node responses.
This complete nodewise permutation null does not cover a null feature when any
tested or untested node covariate remains associated with $Y_t$.

\paragraph{A0.3 (Features fixed before seeing node responses).}\label{assump:label-independence}
The features in $F_t$ and the effective nodewise budget $B$ are measurable
functions of $(X_t,U)$ only. They do not depend on $Y_t$, other training
responses used to reach the node, Monte Carlo permutation draws or statistics,
$p$-values, or sequential stopping decisions.

\paragraph{A0.4 (Fixed-$B$ computation).}\label{assump:fixed-b}
Each $p_{t,j}$ is computed with exactly $B$ random permutations, with no optional
stopping. The Monte Carlo permutation draws used to compute the $p$-values are
additional randomness, independent of $(X_t,Y_t,U)$, and are not included in
$U$.

\paragraph{A0.5 (Conservative ties).}\label{assump:ties}
For right-tail tests, the Stage~A $+1$ Monte Carlo $p$-value counts ties
conservatively using $\ind\{T_b \ge T_{\mathrm{obs}}\}$. If several features tie
after testing, any random tie break is label-independent and does not affect the
Stage~A reject/no-reject event.

\subsection{Results using these assumptions}

Both fixed-node results in \cref{sec:theory} use Assumptions A0.1--A0.5.
Theorem~\ref{thm:plusone-superuniform} proves $+1$ Monte Carlo permutation
$p$-value superuniformity. Corollary~\ref{cor:stageA-global-null} gives Stage~A
complete-null control.

\section{Monte Carlo permutation exchangeability}
\label{app:permutation-exchangeability}

The proof separates two facts. First, statistics computed from i.i.d. random
permutations are exchangeable. Second, under A0.2--A0.4, the usual ``observed
statistic + $B$ random permutations'' computation has the same null rank
distribution as a construction with a randomized baseline. The superuniformity
proof then uses the conservative tie convention in A0.5; see also
\citet{HemerikGoeman2018ExactTestingRandomPermutations} for a broader treatment
of exact random permutation testing.

First consider a fully randomized tuple. Fix a node $t$, feature $j$, and a
real-valued statistic $T(\cdot, \cdot; U)$ with a right tail convention (larger
is more extreme), where $U$ denotes label-independent auxiliary randomness. Let
$\Pi_0, \Pi_1, \dots, \Pi_B$ be i.i.d.\ uniform random permutations of
$\{1,\dots,n_t\}$, independent of the data (and of $U$), and define
\begin{equation}
  T_b := T(X_{t,j}, \Pi_b(Y_t); U) \qquad (b=0,1,\dots,B).
\end{equation}

\begin{lemma}[Exchangeability of Monte Carlo permutation statistics]
\label{lem:mc-exchangeability}
Conditional on $(X_t, Y_t, U)$, the vector $(T_0, T_1, \dots, T_B)$ is
exchangeable. Averaging over $Y_t$ preserves exchangeability, so the vector is
also exchangeable conditional on $(X_t,U)$ and unconditionally.
\end{lemma}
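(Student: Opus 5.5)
Conditional on $(X_t,Y_t,U)$, I would show that the law of $(\Pi_0,\dots,\Pi_B)$ is invariant under every reindexing $\sigma$ of $\{0,\dots,B\}$, and then push that invariance forward through the map producing the statistics. By construction the $\Pi_b$ are i.i.d.\ uniform on the symmetric group $S_{n_t}$ and independent of $(X_t,Y_t,U)$. So conditioning on $(X_t,Y_t,U)$ leaves their joint law unchanged: it is still the product measure $\mu^{\otimes(B+1)}$, with $\mu$ uniform on $S_{n_t}$. A product of identical marginals is invariant under coordinate permutations, so $(\Pi_{\sigma(0)},\dots,\Pi_{\sigma(B)})$ has the same conditional law as $(\Pi_0,\dots,\Pi_B)$.

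Next, fix $(x,y,u)$ and define $g(\pi):=T(x_{j},\pi(y);u)$, a deterministic measurable function on the finite set $S_{n_t}$. Conditionally, $T_b=g(\Pi_b)$ with the same $g$ for every $b$. Applying $g$ coordinatewise commutes with reindexing. Hence $(T_{\sigma(0)},\dots,T_{\sigma(B)})=(g(\Pi_{\sigma(0)}),\dots,g(\Pi_{\sigma(B)}))$ has the same law as $(T_0,\dots,T_B)$. In fact the $T_b$ are conditionally i.i.d., which is stronger than exchangeable. This proves the first claim.

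For the averaging claims, take a bounded measurable test function $h$ on $\mathbb{R}^{B+1}$ and any $\sigma$. The conditional identity $\mathbb{E}[h(T_\sigma)\mid X_t,Y_t,U]=\mathbb{E}[h(T)\mid X_t,Y_t,U]$ holds almost surely. Taking conditional expectation given $(X_t,U)$ and using the tower property gives equality given $(X_t,U)$. Taking full expectation gives the unconditional statement. Because these test-function identities characterize equality in law, exchangeability holds at each level of conditioning. Note that this step uses no null hypothesis, since the randomness comes entirely from the $\Pi_b$.

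The main obstacle is not the algebra but measurability and independence bookkeeping. The statement relies on A0.4 (draws independent of $(X_t,Y_t,U)$) so that conditioning does not distort the product law of the $\Pi_b$. It also relies on $U$ excluding the permutation draws, so that $g$ is the same fixed function for every $b$. I would state these explicitly, invoking a regular conditional distribution (trivial here because $S_{n_t}$ is finite) to justify freezing $(x,y,u)$.
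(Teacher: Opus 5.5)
Your proposal is correct and matches the paper's argument: conditional on $(X_t,Y_t,U)$, each $T_b$ is the same measurable function of $\Pi_b$, and the $\Pi_b$ are i.i.d., so any reindexing of $b\in\{0,\dots,B\}$ leaves the joint law unchanged. Your tower-property step for conditioning on $(X_t,U)$ and for the unconditional law makes explicit the averaging claim, which the paper states in the lemma without a separate argument.
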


\begin{proof}
Conditional on $(X_t, Y_t, U)$, each $T_b$ is a measurable function of $\Pi_b$, and
$(\Pi_0,\dots,\Pi_B)$ are i.i.d. Thus any permutation of the indices
$b \in \{0,\dots,B\}$ leaves the joint law unchanged.
\end{proof}

\begin{lemma}[The observed statistic has the same null rank distribution]
\label{lem:identity-vs-random}
Assume A0.2--A0.4 (\cref{app:assumptions}). Let $\Pi_1,\dots,\Pi_B$ be i.i.d.\
uniform random permutations of $\{1,\dots,n_t\}$, independent of $(X_t,Y_t,U)$.
Define the observed statistic and permuted statistics by
\begin{equation}
  T_{\mathrm{obs}} := T(X_{t,j}, Y_t; U),
  \qquad
  T_b := T(X_{t,j}, \Pi_b(Y_t); U) \quad (b=1,\dots,B),
\end{equation}
and the standard +1 Monte Carlo $p$-value (right tail) by
\begin{equation}
  p_{\mathrm{id}}
  := \frac{1 + \sum_{b=1}^B \ind\{T_b \ge T_{\mathrm{obs}}\}}{B+1}.
\end{equation}
Let $\Pi_0$ be an additional independent uniform random permutation and set
$T_0 := T(X_{t,j}, \Pi_0(Y_t); U)$ (keeping $T_1,\dots,T_B$ as above), and define
\begin{equation}
  p_{\mathrm{rnd}}
  := \frac{1 + \sum_{b=1}^B \ind\{T_b \ge T_0\}}{B+1}.
\end{equation}
Both quantities use the same conservative right-tail tie convention: ties with
the baseline statistic are counted as exceedances. Under the null, conditional
on $(X_t,U)$, $p_{\mathrm{id}}$ and $p_{\mathrm{rnd}}$ have the same
distribution with this convention. It is enough to prove the rank bound for
the exchangeable construction $p_{\mathrm{rnd}}$ and apply the same conclusion
to the standard ``observed statistic + $B$ random permutations'' computation.
\end{lemma}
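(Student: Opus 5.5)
The plan is to show that, conditional on $(X_t,U)$, the tuple $(T_{\mathrm{obs}},T_1,\dots,T_B)$ has the same joint law as $(T_0,T_1,\dots,T_B)$. Once this holds, $p_{\mathrm{id}}$ and $p_{\mathrm{rnd}}$, being the same measurable function of these two tuples, have the same law. The natural device is the substitution $Y_t \mapsto \Pi_0(Y_t)$. Because $\Pi_0$ is independent of $(X_t,Y_t,U)$ and of $\Pi_1,\dots,\Pi_B$ (A0.4), A0.2 gives that $\Pi_0(Y_t)$ has the same conditional law as $Y_t$ given $(X_t,U)$. To make this precise, condition on $\Pi_0=\pi$: by exchangeability, $\pi(Y_t)\stackrel{d}{=}Y_t$ given $(X_t,U)$, and this holds for every fixed $\pi$. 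The randomness of $F_t$, $j$ and $B$ is harmless, since by A0.3 these are functions of $(X_t,U)$ only and are therefore constants under the conditioning.

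Next, I would transport the permuted statistics along with the substitution. Writing $Y'_t:=\Pi_0(Y_t)$, we have $T_0=T(X_{t,j},Y'_t;U)$. For $b\ge1$, set $\Pi'_b:=\Pi_b\circ\Pi_0^{-1}$, so that $\Pi_b(Y_t)=\Pi'_b(Y'_t)$ and $T_b=T(X_{t,j},\Pi'_b(Y'_t);U)$. The key fact is that, conditional on $(X_t,Y_t,U,\Pi_0)$, the composed permutations $\Pi'_1,\dots,\Pi'_B$ are still i.i.d.\ uniform, because right-multiplying a uniform permutation by a fixed permutation leaves it uniform. Hence they are independent of $Y'_t$, with the same law as the original draws. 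It follows that $(Y'_t,\Pi'_1,\dots,\Pi'_B)$ given $(X_t,U)$ has the same law as $(Y_t,\Pi_1,\dots,\Pi_B)$. Applying the fixed measurable map $(y,\sigma_1,\dots,\sigma_B)\mapsto (T(X_{t,j},y;U),T(X_{t,j},\sigma_1(y);U),\dots)$ then yields the claimed equality in law of the two tuples.

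The final step applies the rank function $(s_0,\dots,s_B)\mapsto (1+\sum_b \ind\{s_b\ge s_0\})/(B+1)$, which is the same map with the same $\ge$ tie convention of A0.5 in both cases, and so gives $p_{\mathrm{id}}\stackrel{d}{=}p_{\mathrm{rnd}}$. This equality in law is exactly what the lemma asserts; the concluding sentence of the lemma about reducing to the rank bound is then a remark, used later with \cref{lem:mc-exchangeability}.

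I expect the main obstacle to be the coupling in the second step. One must check that the permutation convention, meaning whether $\Pi(Y)$ permutes positions by $\Pi$ or by $\Pi^{-1}$, makes the composition identity $\Pi_b(Y_t)=(\Pi_b\circ\Pi_0^{-1})(\Pi_0(Y_t))$ correct. One must also check that independence from $Y'_t$ holds jointly, not just marginally. I would handle both by conditioning on $\Pi_0$ and on $Y_t$ and invoking the group invariance of the uniform measure on the symmetric group.
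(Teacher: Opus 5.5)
Your proposal is correct and matches the paper's proof essentially step for step. The shared steps are: the substitution $Y_t'=\Pi_0(Y_t)$ with $Y_t'\stackrel{d}{=}Y_t$ by A0.2; the reparametrization $\Pi_b'=\Pi_b\circ\Pi_0^{-1}$, whose invariance under composition shows the $\Pi_b'$ are i.i.d.\ uniform and independent of $Y_t'$; and pushing the resulting equality in law through the statistic map and then the $\ge$ rank map. On the convention issue you flagged, the paper simply fixes the permutation action under which $(\Pi_b\circ\Pi_0^{-1})(\Pi_0(Y_t))=\Pi_b(Y_t)$; under the opposite action one would use $\Pi_0^{-1}\circ\Pi_b$, and invariance of the uniform law on the symmetric group gives the same conclusion.
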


\begin{proof}
Fix $(X_t,U)$ and work under the null. By A0.2, permuting the node labels does
not change the conditional distribution of $Y_t$. Use the permutation-action
convention under which
$(\Pi_b \circ \Pi_0^{-1})(\Pi_0(Y_t)) = \Pi_b(Y_t)$. Let $\Pi_0$ be an
independent uniform permutation and set $Y_t' := \Pi_0(Y_t)$. Then
$Y_t' \stackrel{d}{=} Y_t$ conditional on $(X_t,U)$.

For $b=1,\dots,B$, define $\Pi_b' := \Pi_b \circ \Pi_0^{-1}$, so
$\Pi_b'(Y_t') = \Pi_b(Y_t)$. Conditional on $(Y_t,\Pi_0)$, right composition by
$\Pi_0^{-1}$ maps the independent uniform permutations $\Pi_b$ to independent
uniform permutations $\Pi_b'$. This conditional law does not depend on the
conditioning values. Because $Y_t'$ is a function of $(Y_t,\Pi_0)$, the
transformed permutations are independent of $(Y_t',\Pi_0)$.
Therefore,
\begin{equation}
  \begin{aligned}
    \big(T_{\mathrm{obs}}, T_1,\dots,T_B\big)
    &= \big(T(X_{t,j},Y_t;U), T(X_{t,j},\Pi_1(Y_t);U),\dots,T(X_{t,j},\Pi_B(Y_t);U)\big) \\
    &\stackrel{d}{=} \big(T(X_{t,j},Y_t';U), T(X_{t,j},\Pi_1'(Y_t');U),\dots,T(X_{t,j},\Pi_B'(Y_t');U)\big) \\
    &= \big(T_0, T_1,\dots,T_B\big),
  \end{aligned}
\end{equation}
where $\stackrel{d}{=}$ is conditional on $(X_t,U)$. Applying the same
$\ge$ comparison map to these equal-in-distribution vectors gives the same
conditional distribution for $p_{\mathrm{id}}$ and $p_{\mathrm{rnd}}$.
\end{proof}

\section{Proof of Theorem~\ref{thm:plusone-superuniform}}
\label{app:proof-plusone}

\Cref{app:permutation-exchangeability} reduces the usual computation with one
observed statistic and $B$ random permutations to the exchangeable rank
construction. We prove Theorem~\ref{thm:plusone-superuniform} under A0.1--A0.5
(\cref{app:assumptions}).

Fix $(X_t,U)$, fix one feature $j \in F_t$, and work under the
nodewise complete permutation null at the fixed node $t$. Under A0.2--A0.4, the
standard fixed-$B$ computation has the same conditional $p$-value distribution
as the randomized exchangeable construction; see
\cref{lem:identity-vs-random} in \cref{app:permutation-exchangeability} and
\citet{HemerikGoeman2018ExactTestingRandomPermutations}. It is enough
to prove the rank bound for an exchangeable tuple, which we denote by
$(T_{\mathrm{obs}},T_1,\dots,T_B)$.

Define the conservative right-tail rank numerator
\begin{equation}
  R := 1 + \sum_{b=1}^B \ind\{T_b \ge T_{\mathrm{obs}}\},
  \qquad
  p_{\mathrm{MC}}^{\ge} = \frac{R}{B+1}.
\end{equation}
The left-tail version replaces the exceedance check by
$\ind\{T_b \le T_{\mathrm{obs}}\}$; equivalently, apply the right-tail argument
below to $-T$. Stage~B uses this left-tail convention for split scores after
Stage~A selects a feature, but the fixed-node Type I error guarantee here
concerns Stage~A.

\paragraph{No ties.}
If the statistics almost surely have no ties, then the $\ge$ indicator is the
same as $>$, and $R$ is the descending rank of $T_{\mathrm{obs}}$ among
$(T_{\mathrm{obs}},T_1,\dots,T_B)$, with rank 1 assigned to the largest
statistic. Exchangeability implies
\begin{equation}
  \PP(R = r \mid X_t,U) = \frac{1}{B+1}, \qquad r=1,\dots,B+1,
\end{equation}
so for any $\alpha \in [0,1]$,
\begin{equation}
  \PP(p_{\mathrm{MC}}^{\ge} \le \alpha \mid X_t,U)
  = \PP\!\big(R \le (B+1)\alpha \mid X_t,U\big)
  = \frac{\lfloor (B+1)\alpha \rfloor}{B+1}
  \le \alpha.
\end{equation}

\paragraph{Ties.}
If ties can occur, A0.5 uses conservative tie handling: the right-tail $+1$
$p$-value uses $\ind\{T_b \ge T_{\mathrm{obs}}\}$, so ties with the observed
statistic are counted in the numerator. To compare this rule with a no-ties
rank, let
$V_{\mathrm{obs}},V_1,\dots,V_B \stackrel{\mathrm{i.i.d.}}{\sim}
\mathrm{Unif}(0,1)$ be independent of all other randomness and define augmented
statistics $\tilde{T}_{\mathrm{obs}} := (T_{\mathrm{obs}}, V_{\mathrm{obs}})$
and $\tilde{T}_b := (T_b, V_b)$ for $b=1,\dots,B$. Order the augmented pairs
lexicographically for the right tail: $\tilde{T}_b$ exceeds
$\tilde{T}_{\mathrm{obs}}$ exactly when $T_b > T_{\mathrm{obs}}$, or when
$T_b = T_{\mathrm{obs}}$ and $V_b > V_{\mathrm{obs}}$. Then
$(\tilde{T}_{\mathrm{obs}},\tilde{T}_1,\dots,\tilde{T}_B)$ is exchangeable and
almost surely has no ties, so the no-ties argument gives a superuniform
$p$-value
\begin{equation}
  \tilde{p}_{\mathrm{MC}}^{\ge}
  :=
  \frac{
    1 + \sum_{b=1}^B
    \ind\{
      T_b > T_{\mathrm{obs}}
      \text{ or } (T_b = T_{\mathrm{obs}} \text{ and } V_b > V_{\mathrm{obs}})
    \}
  }{B+1}.
\end{equation}
Moreover, each tie-broken indicator in $\tilde{p}_{\mathrm{MC}}^{\ge}$ is
bounded by the corresponding conservative indicator
$\ind\{T_b \ge T_{\mathrm{obs}}\}$. Thus
$p_{\mathrm{MC}}^{\ge} \ge \tilde{p}_{\mathrm{MC}}^{\ge}$, and
\begin{equation}
  \PP(p_{\mathrm{MC}}^{\ge} \le \alpha \mid X_t,U)
  \le \PP(\tilde{p}_{\mathrm{MC}}^{\ge} \le \alpha \mid X_t,U) \le \alpha.
\end{equation}
Taking expectations over $(X_t,U)$ gives the unconditional bound,
$\PP(p_{\mathrm{MC}}^{\ge} \le \alpha) \le \alpha$.
The fixed-node Stage~A corollary applies this single-test bound to the $m_t$
feature tests with threshold $\alpha_{\mathrm{sel}}/m_t$ and uses the Bonferroni
union bound; no independence among feature tests is required.

\section{Algorithm and reproducibility notes}
\label{app:methods}

Algorithm settings, benchmark configuration, and coverage rules determine how
the main results should be read. Section~\ref{sec:method}
describes the tree-growing loop, the setup and proof appendices state the
fixed-node results, and Appendix~\ref{app:complexity-details} gives runtime and
memory details for the benchmarked tree-growing algorithm.

\subsection{Scope relative to the theory}

The fixed-node theorem covers exhaustive fixed-$B$ Stage~A feature selection at
a fixed node under the complete permutation null. In that result, the tested
features and resample budget are fixed independently of the node responses. The
benchmarked tree-growing algorithm also uses settings outside that theorem:
\begin{itemize}
  \item early stopping,
  \item feature and threshold ordering inside early stopping,
  \item response-dependent feature muting,
  \item bounded threshold sets,
  \item bootstrap resampling and forest aggregation,
  \item optional honesty.
\end{itemize}
Under these conditions, the theorem covers exhaustive fixed-$B$ Stage~A values.
Stage~A and split scores from adaptive trees are tree-construction scores.

\subsection{Node-level choices}

\paragraph{Stage~A.}
At each node, the tree-growing rule removes features that are constant on
the node before any feature subsampling. Stage~A computes $+1$ Monte
Carlo $p$-values. It may scan features in preliminary-score order and, under
early stopping, may stop once a feature meets the nodewise threshold. With
feature muting, features with non-rejecting Stage~A values can be removed from
descendant feature pools in the same subtree. These shortcuts reduce
computation. When they are active, the returned $q_t^{\mathrm{feat}}$ is the
feature-selection score used to grow the tree; the exhaustive fixed-$B$
reference value is defined over the full feature pool.

\paragraph{Stage~B.}
After Stage~A selects a feature, Stage~B builds thresholds from
midpoints between consecutive unique feature values. Exact search tests all
midpoints. Random sampling tests a bounded subset; percentile and histogram
rules test bounded representative thresholds from the midpoint distribution.
With sequential stopping, Stage~B may scan thresholds in a reordered sequence.
Because Stage~B tests thresholds only after a response-selected feature is
chosen, $q_t^{\mathrm{split}}$ is a split-selection score.

\paragraph{Split statistics.}
For permutation split testing, Stage~B uses weighted child impurity as the
tested statistic and a left tail convention. The final minimum
impurity-decrease filter uses the corresponding weighted impurity decrease.
These quantities support split construction.

\subsection{Permutation budgets and early stopping}

Both stages use Phipson--Smyth $+1$ Monte Carlo tail estimates
\citep{PhipsonSmyth2010PermutationPValues}: right-tailed in Stage~A and
left-tailed in Stage~B. For the benchmarked CIT and CIF configurations, the
resampling budget is set from the Bonferroni-adjusted nodewise threshold and is
large enough for rejection to be possible on the discrete $+1$ scale.

\paragraph{Sequential stopping.}
The benchmarked CIT and CIF configurations use adaptive sequential stopping. The
stopping rule monitors permutation exceedances and stops once a Beta posterior
places the configured confidence on one side of the nodewise threshold. Because
the returned values are stopping-time estimates, the paper treats the fixed-node
guarantee and runtime ablations separately.

\subsection{Forests, resampling, and honesty}

CIF fits bootstrap trees and forms feature rankings by aggregating tree
split-importance vectors. The benchmark uses stratified bootstrap sampling for
classification and standard bootstrap sampling for regression. These forest-level
choices affect the fitted rankings, while the fixed-node theorem covers only the
Stage~A reference test.

\subsection{Benchmark coverage}

\Cref{sec:experiments-protocol,sec:experiments-repro} define the benchmark
protocol and configuration-selection rule. The benchmark compares 17
classification methods and 18 regression methods, including permutation-test
filters, embedded tree and ensemble methods, R ctree/cforest, and wrapper
methods. The rules below define which runs enter the appendix tables.

For per-method and direct pairwise summaries, each comparison uses the datasets,
downstream learners, and standard values of $k$ where the methods in that
comparison are present. Analyses that compare all methods restrict to datasets
with every method, downstream learner, and standard value of $k$ present. These
rules give 14 classification and 6 regression datasets for complete-case
analyses. Comparisons involving partykit ctree or cforest exclude
\nolinkurl{dexter}, because those R-method runs are absent; non-R comparisons
can still use \nolinkurl{dexter} when both methods are present.
\Cref{tab:coverage-skips} lists the classification seed skips and the R
ctree/cforest \texttt{dexter} exclusion.

\subsection{Randomness and reproducibility}

The benchmark configuration fixes the seed schedule used for all reported
runs. Pure Python permutation routines use local \texttt{numpy} generators.
Numba parallel permutation paths assign one seed per resample, which avoids
sharing a mutable generator across parallel loop iterations.

\begin{table}[H]
  \centering
  \footnotesize
  \setlength{\tabcolsep}{4pt}
  \begin{tabular}{@{}lll@{}}
    \toprule
    \tablehead{Method/config} & \tablehead{Dataset} & \tablehead{Seeds} \\
    \midrule
    R ctree \texttt{MonteCarlo} & \texttt{gisette} & \multicolumn{1}{c}{3} \\
    R ctree \texttt{MonteCarlo} & \texttt{isolet} & \multicolumn{1}{c}{2, 3} \\
    CIT with selector \texttt{rdc} & \texttt{gisette} & \multicolumn{1}{c}{0, 1, 3} \\
    CIT with selector \texttt{rdc} & \texttt{orlraws10P} & \multicolumn{1}{c}{1} \\
    partykit ctree, cforest & \texttt{dexter} & all seeds excluded for these R methods \\
    \bottomrule
  \end{tabular}
  \caption{Classification benchmark exclusions. The \texttt{dexter} row is a
  method-level exclusion for partykit ctree and cforest; the other rows are
  seed-level skips.}
  \label{tab:coverage-skips}
\end{table}

\section{Runtime and memory accounting}
\label{app:complexity-details}

The runtime terms behind the ablations in \cref{sec:results-practical} are
counted node by node: the scans, copies, and feature or threshold tests actually
reached. A node-by-node sum is needed because feature pools, node sizes, threshold
counts, and stopping times vary across the tree.

Let $n$ be the training sample count and let $p$ be the number of features. At
node $t$, let $n_t$ be the number of samples reaching the node. Using the
notation of \cref{sec:stageA-stageB}, write $a_t=|F_{t,\mathrm{avail}}|$ and
$d_t^{\mathrm{const}}=|F_{t,\mathrm{avail}}\setminus
F_{t,\mathrm{nonconst}}|$. After constant feature pruning and any
feature subsampling, Stage~A considers $F_t$, with
$m_t=|F_t|$. Let $d_t^{\mathrm{mute}}$ be the number of tested Stage~A features
removed from the descendant feature pool by feature muting;
if feature muting is disabled, $d_t^{\mathrm{mute}}=0$.

Stage~B is reached only when Stage~A returns a selected feature $j_t^\star$:
either the Stage~A permutation feature test rejects or selector permutation
testing is disabled. If Stage~B is reached, let $\tilde{h}_t$ be the number of
thresholds generated for $j_t^\star$ before any
minimum leaf size filtering. Let
$h_t=|C_{t,j_t^\star}|\le \tilde{h}_t$ be the number of valid thresholds that
remain after this filtering; if no filtering is applied, $h_t=\tilde{h}_t$.

Let $C^{\mathrm{sel}}_j(n_t)$ be the cost of the Stage~A statistic for feature
$j$ at node size $n_t$, excluding the label copies and shuffles counted
separately below. For a list of selectors, this denotes the cost of evaluating
the selector statistics for one observed or permuted draw and taking their
common-scale maximum. Let
$C^{\mathrm{split}}(n_t)$ be the cost of one Stage~B impurity statistic after
the threshold mask and child-label slices have been formed; the mask, slice,
label shuffle, and copy overhead is counted separately below. For the
multiple-correlation selector with Gini splitting in classification and the
Pearson-correlation selector with MSE splitting in regression, these statistic
evaluations are linear in $n_t$ for a fixed number of classes. The randomized
dependence coefficient (RDC) is $O(n_t\log n_t)$ for fixed random-projection and
class counts. For distance correlation and mutual information, the cost depends
on the backend used by $C^{\mathrm{sel}}_j(n_t)$: distance correlation may cost
$O(n_t\log n_t)$ or $O(n_t^2)$, and mutual information is delegated to
scikit-learn.

\paragraph{How runtime hyperparameters enter the accounting.}
\begin{itemize}
  \item \textbf{Feature subsampling.} This caps $m_t$ and therefore the
  Stage~A testing width. It does not cap the constant feature preprocessing pass
  over all $a_t$ available features. Feature-set construction and random
  ordering add at most linear overhead in the local available feature count.
  \item \textbf{Feature muting.} This can shrink $a_u$ and $m_u$ at descendant
  nodes $u$. It can also add $O(a_t d_t^{\mathrm{mute}})$ feature index updates
  at the current node when tested features whose Stage~A $p$-values do not
  reject are removed.
  \item \textbf{Threshold-search method and threshold cap.}
  Non-exact threshold methods can reduce the returned pre-filter threshold set
  $\tilde{h}_t$ when the configured cap is active. Exact search and the fallback
  for features with at most four unique values use all midpoints.
  \item \textbf{Adaptive stopping, feature scanning, and threshold scanning.}
  These reduce work only when they shorten the actual stopping path. They may
  change feature or threshold order and reduce the number of tested features,
  tested thresholds, or permutation draws, but the worst case is still all
  features or thresholds and the full budget. Bonferroni is separate because it
  changes the statistical rule.
  \item \textbf{Bonferroni correction and resampling presets.} When Bonferroni
  correction is enabled, the nodewise threshold is divided by the number of
  tests in that stage. Write $r_{\mathrm{tests}}$ for this count ($m_t$ in
  Stage~A and $h_t$ in Stage~B) and
  $\alpha_{\mathrm{stage}}\in\{\alpha_{\mathrm{sel}},\alpha_{\mathrm{split}}\}$
  for the corresponding nominal level. Named resampling presets compute the
  permutation budget at the stricter threshold
  $\alpha_{\mathrm{stage}}/r_{\mathrm{tests}}$: the \emph{minimum} budget scales
  like $r_{\mathrm{tests}}/\alpha_{\mathrm{stage}}$, the \emph{maximum} budget
  is $\lceil r_{\mathrm{tests}}^2/(4\alpha_{\mathrm{stage}}^2)\rceil$ up to
  rounding, and \emph{auto} uses the same stricter threshold in its
  normal-quantile formula. For an explicit integer budget, the budget is
  multiplied by $r_{\mathrm{tests}}$.
  \item \textbf{Bootstrap sample size, forest size, and worker count.} The
  bootstrap sample-size control affects the requested or capped bootstrap sample
  size, the forest size sets the number of tree fits, and the worker count
  affects parallel scheduling and memory pressure.
\end{itemize}

\paragraph{Per-node accounting.}
At each node, the tree builder first checks whether it must stop, including
whether all labels are equal, and computes the leaf value if it stops. This
costs $O(n_t)$ at every node. If the node can split, the builder scans all
available features and removes features that are constant at the node. This
costs $O(a_t n_t)$ for the data scan. Updating the array of available feature
indices costs another $O(a_t d_t^{\mathrm{const}})$ for constant feature
removals in the dense matrix algorithm, with worst case $O(a_t^2)$ if many
available features are constant at the same node. This pruning occurs before
feature subsampling, so subsampling caps Stage~A width but not the
initial constant feature scan.

Let $\mathcal{J}_t$ be the Stage~A features whose score or permutation test is
evaluated before any feature-level early stop. With Stage~A permutation tests,
this is the evaluated subset $E_t$ from \cref{sec:stageA-stageB}. It does not
include the optional feature-scanning prescan, which is counted separately. Let
$b^{\mathrm{sel}}_{t,j}$ be the number of permutation draws actually used for
feature $j$; when permutation tests are not run, take
$b^{\mathrm{sel}}_{t,j}=0$. The Stage~A testing work is
\begin{equation}
  S^{\mathrm{A}}_t
  =
  O\!\left(
    \sum_{j \in \mathcal{J}_t}
      \bigl(b^{\mathrm{sel}}_{t,j}+1\bigr)
      \bigl(C^{\mathrm{sel}}_j(n_t)+n_t\bigr)
  \right),
\end{equation}
where the $+1$ accounts for the observed statistic and the $n_t$ term accounts
for label copies, shuffles, and other per-draw linear overhead inside the
permutation test. Feature scanning, when enabled with selector early stopping,
computes prescan scores and sorts features, adding
\begin{equation}
  G^{\mathrm{A}}_t
  =
  O\!\left(
    \sum_{j \in F_t} C^{\mathrm{scan}}_j(n_t)
    + m_t \log m_t
  \right),
\end{equation}
where $C^{\mathrm{scan}}_j$ is the selector-score cost used for ordering. If
feature muting removes tested features whose Stage~A $p$-values do not reject,
updating the descendant available feature array adds
$O(a_t d_t^{\mathrm{mute}})$ work at the current node.

Stage~B first builds thresholds for the selected feature. An upper
bound is $O(n_t\log n_t)$, because unique values are computed and sorted before
threshold caps are applied. If the minimum leaf size is greater than one,
filtering adds $O(n_t\log n_t+\tilde{h}_t\log n_t)$ for the feature sort and
threshold lookups. Bounded non-exact methods reduce $\tilde{h}_t$ only when the
effective cap is below the number of available midpoints; with few unique values
or no effective cap, all available midpoints can remain in the threshold set.
They do not remove the initial unique-value pass.

Let $\mathcal{C}^{\mathrm{eval}}_t \subseteq C_{t,j_t^\star}$ be the retained
Stage~B thresholds whose permutation test or non-permutation split score is
evaluated before any threshold-level early stop. Let
$b^{\mathrm{split}}_{t,c}$ be the corresponding realized permutation count;
when splitter permutation tests are not run, take
$b^{\mathrm{split}}_{t,c}=0$. The Stage~B testing work is
\begin{equation}
  S^{\mathrm{B}}_t
  =
  O\!\left(
    \sum_{c \in \mathcal{C}^{\mathrm{eval}}_t}
      \bigl(b^{\mathrm{split}}_{t,c}+1\bigr)
      \bigl(C^{\mathrm{split}}(n_t)+n_t\bigr)
  \right).
\end{equation}
In $S^{\mathrm{B}}_t$, the added $n_t$ term accounts for forming or applying
threshold masks, child-label slices, label copies, and shuffles for each
observed or permuted draw.
Under splitter early stopping, threshold ordering has two paths. If threshold
scanning is enabled, it uses weighted child impurity as an ordering score and
adds $O(h_t(C^{\mathrm{split}}(n_t)+n_t) + h_t\log h_t)$. If threshold scanning
is disabled, the retained thresholds are still randomly permuted, adding
$O(h_t)$. These ordering steps are separate from the
permutation $p$-value computation.

Finally, if Stage~B accepts a split and the impurity decrease is large enough,
the dense matrix tree builder creates left and right child matrices with all $p$
columns. This copy costs $O(n_t p)$. Nodes that reach Stage~A but become leaves
because Stage~A or Stage~B does not reject still incur the work already
described. Terminal leaves can also occur when constant feature pruning leaves no
available feature, threshold generation or leaf size filtering leaves no valid
threshold, or an accepted threshold fails the minimum impurity-decrease gate.
These leaves contribute the costs incurred up to that point but do not allocate
child matrices.

\paragraph{Fixed-$B$ and adaptive modes.}
In the exhaustive fixed-$B$ reference setting, Stage~A tests every feature in
$F_t$, Stage~B tests every retained threshold, and both stages use fixed
permutation budgets. In adaptive runs, stopping can reduce work by testing fewer
features or thresholds or using fewer permutation draws. Feature and threshold
scanning are ordering heuristics: they pay for a prescan so likely features or
thresholds are tried earlier. They can reduce realized runtime when the shorter
stopping path offsets the prescan work, but they do not improve the worst-case
bound because the stopping rule can still examine all features or thresholds and
use the full budget.

\paragraph{From nodes to trees and forests.}
Let $W_t$ be the cost paid at node $t$: the stopping checks, any Stage~A work,
any Stage~B work, and any child-copy work reached by the builder. Tree training
cost is the sum over all realized tree nodes, not only over internal split
nodes:
\begin{equation}
  T_{\mathrm{tree}} = \sum_{t \in \mathcal{V}(\mathcal{T})} W_t.
\end{equation}
Here $\mathcal{V}(\mathcal{T})$ and $\mathcal{I}(\mathcal{T})$ are the realized
node set and internal split-node set. Leaves stopped by the initial checks
contribute only the stopping and value computation term at that node, while leaves
that fail after Stage~A or Stage~B contribute the corresponding testing terms. A
binary tree has
$|\mathcal{V}(\mathcal{T})|=2|\mathcal{I}(\mathcal{T})|+1$. A depth cap gives a
coarse exponential node count bound, but $n_t$, $a_t$, $m_t$, $\tilde{h}_t$,
$h_t$, and the realized permutation counts usually change with depth.

If honesty is enabled, the node sum is computed on the structure building
subsample. Training also pays $O(np)$ for the dense split/estimation partition
and a leaf re-estimation pass. If $e$ is the estimation-sample size and
$\ell_i$ is the fitted-tree path length for estimation sample $i$, the current
tuple-based path representation gives worst-case routing cost
$O(\sum_{i=1}^{e}\ell_i^2)$, followed by a stored-path tree walk of
$O(\sum_{v\in\mathcal{V}(\mathcal{T})} d_v + e)$, where $d_v$ is node depth.

For a forest of $M$ trees, let $s_r$ be the number of samples used to fit tree
$r$ after bootstrap and classifier sampling. Let $C^{\mathrm{sample}}_r$ be the
cost of constructing those sample indices. Row selection and per-tree dense
validation/copy add $O(s_r p)$ before tree growth. The forest fit first
validates the full input and then fits the trees.
Write $T_{\mathrm{tree},r}$ for the node-sum cost of tree $r$ computed as above.
The total computational work of the forest fit, before parallel wall-clock
scheduling effects, can be summarized as
\begin{equation}
  T_{\mathrm{forest}}
  =
  O(np)
  + \sum_{r=1}^{M}
      \left(C^{\mathrm{sample}}_r + O(s_r p) + T_{\mathrm{tree},r}
      + \mathbf{1}_{\mathrm{honest}} H_{\mathrm{honest},r}\right)
  + O(Mp)
  + \mathbf{1}_{\mathrm{oob}} T_{\mathrm{oob}},
\end{equation}
where $\mathbf{1}_{\mathrm{honest}}$ is one when honesty is enabled and zero
otherwise, $H_{\mathrm{honest},r}$ denotes the honesty partition and leaf
re-estimation cost just described for tree $r$, $\mathbf{1}_{\mathrm{oob}}$ is
one when OOB scoring is requested and zero otherwise, and $T_{\mathrm{oob}}$ is
the optional post-fit OOB scoring cost, separate from the tree-building sum.
With joblib parallelism, wall-clock time depends on the effective worker count,
load balance across trees, process overhead, and memory bandwidth; the
expression above is a work accounting.

Without bootstrap sampling, each tree starts with root size $n$ and the
bootstrap cap is unused. With bootstrap sampling, the configured bootstrap
sample-size parameter sets the requested sample size or an upper bound, and
classifier sampling can reduce $s_r$ further. If honesty is enabled, the root
size for structure building is the splitting subsample size.

\paragraph{Memory accounting.}
At a node, feature index arrays and scanning arrays use $O(a_t+m_t)$ space,
and threshold construction uses $O(n_t+\tilde{h}_t+h_t)$ auxiliary space.
Permutation tests add workspaces for statistics, masks, labels, permutation
outputs, and selector-specific arrays. Fixed-budget non-adaptive tests may store
an $O(B_{\mathrm{stage}})$ vector of permuted statistics, where
$B_{\mathrm{stage}}$ is the configured fixed budget for the selector or splitter
test being run. Adaptive paths can keep constant-size counters apart from copied
or shuffled labels and selector-specific arrays.

The dense matrix terms dominate in many fits. Validation materializes a dense
$O(np)$ feature matrix. Each accepted split materializes dense left and right
child matrices with all $p$ columns and combined size $O(n_t p)$. During
depth-first recursion, parent frames retain sibling child arrays while one child
is being grown. If $\mathcal{P}$ is the set of accepted split nodes whose child
matrices are live on the active recursion stack, the dense-array part of peak
single-tree training memory is bounded by
\[
  O\!\left(np + p\sum_{u\in\mathcal{P}} n_u\right),
\]
plus node-local auxiliary workspaces. This is a live-path bound: it is $O(np)$
for balanced growth but can be larger for highly unbalanced recursion.

Honesty adds memory during fitting for the dense split and estimation partitions.
It also stores estimation-sample groups by leaf path, giving
$O(e+\sum_{q\in\mathcal{Q}} |q|)$ path-index workspace for the set
$\mathcal{Q}$ of retained leaf-path keys, bounded by $O(eD)$ when all estimation
samples have path length at most $D$. This is in addition to the tree-building
memory above.

Forest peak training memory grows with concurrent tree fits. Let
$w=\min(M,n_{\mathrm{jobs}})$, where $n_{\mathrm{jobs}}$ is the number of
parallel jobs used by the joblib backend after parameter normalization. Peak
memory includes the validated forest input, returned estimators, and up to $w$
concurrently executing per-tree workspaces: bootstrap indices, dense
row-selected matrices, validation copies, and peak memory terms for each tree.
Depending on joblib serialization or memmap behavior, workers may also hold
shared memmaps or copies of the validated input.

For the same $M$-tree forest, fitted storage includes stored node records for
all trees, the estimator list, $O(Mp)$ per-tree arrays such as feature names and
feature importances, and $O(p)$ forest-level metadata arrays. If OOB scoring is
enabled, fitted storage also includes $O(nK)$ classifier OOB decision values for
$K$ classes or $O(n)$ regressor OOB predictions, plus the scalar OOB score. OOB
scoring also uses temporary workspace during fitting for the OOB count vector,
recomputed bootstrap and unsampled indices, prediction slices
$X_{\mathrm{oob}}$, and per-tree predictions or class-probability arrays.

\FloatBarrier
\section{Datasets}
\label{app:dataset-inventory}

\subsection{Benchmark datasets}

The dataset inventory covers the real-data benchmark and the synthetic
feature-recovery analyses.
For each real-data benchmark dataset, the tables report the number of samples
$n$, the number of features $p$, and the source collection. Source labels refer
to UCI Machine Learning Repository \citep{KellyLongjohnNottinghamUCI}, the NIPS
2003 feature selection challenge \citep{Guyon2004NIPSFeatureSelectionChallenge},
ASU/scikit-feature \citep{Li2018FeatureSelectionDataPerspective}, OpenML
\citep{Vanschoren2014OpenML}, and CoEPrA molecular-property benchmarks
\citep{DemirKavuk2010CoeprA}. ASU/SF abbreviates ASU/scikit-feature.
Slash-separated labels indicate original benchmark provenance and OpenML
availability or hosting.

\begin{table}[H]
  \centering
  \scriptsize
  \setlength{\tabcolsep}{5pt}
  \renewcommand{\arraystretch}{1.0}
  \begin{tabular}{@{}lS[table-format=5.0]S[table-format=5.0]l@{}}
    \toprule
    \tablehead{Dataset} & {$n$} & {$p$} & \tablehead{Source} \\
    \midrule
    \nolinkurl{gamma} & 19,020 & 9 & UCI \\
    \nolinkurl{glass} & 214 & 10 & UCI \\
    \nolinkurl{page-blocks} & 5,473 & 10 & UCI \\
    \nolinkurl{vowel-context} & 990 & 10 & UCI \\
    \nolinkurl{wine} & 178 & 13 & UCI \\
    \nolinkurl{letter} & 20,000 & 16 & UCI \\
    \nolinkurl{pendigits} & 10,992 & 16 & UCI \\
    \nolinkurl{spam} & 4,601 & 57 & UCI \\
    \nolinkurl{musk} & 6,597 & 166 & UCI \\
    \nolinkurl{madelon} & 2,000 & 500 & NIPS 2003 \\
    \nolinkurl{isolet} & 7,797 & 616 & UCI \\
    \nolinkurl{ORL} & 400 & 1,024 & ASU/SF \\
    \nolinkurl{Yale} & 165 & 1,024 & ASU/SF \\
    \nolinkurl{warpAR10P} & 130 & 2,400 & ASU/SF \\
    \nolinkurl{warpPIE10P} & 210 & 2,420 & ASU/SF \\
    \nolinkurl{gisette} & 6,000 & 5,000 & NIPS 2003 \\
    \nolinkurl{TOX_171} & 171 & 5,748 & ASU/SF \\
    \nolinkurl{ALLAML} & 72 & 7,129 & ASU/SF \\
    \nolinkurl{arcene} & 100 & 10,000 & NIPS 2003 \\
    \nolinkurl{pixraw10P} & 100 & 10,000 & ASU/SF \\
    \nolinkurl{orlraws10P} & 100 & 10,304 & ASU/SF \\
    \nolinkurl{CLL_SUB_111} & 111 & 11,340 & ASU/SF \\
    \nolinkurl{dexter} & 300 & 20,000 & NIPS 2003 \\
    \bottomrule
  \end{tabular}
  \caption{Classification benchmark datasets.}
  \label{tab:benchmark-dataset-inventory-clf}
\end{table}

\begin{table}[H]
  \centering
  \scriptsize
  \setlength{\tabcolsep}{5pt}
  \begin{tabular}{@{}lS[table-format=4.0]S[table-format=4.0]l@{}}
    \toprule
    \tablehead{Dataset} & {$n$} & {$p$} & \tablehead{Source} \\
    \midrule
    \nolinkurl{facebook} & 500 & 21 & UCI/OpenML \\
    \nolinkurl{imports-85} & 205 & 76 & UCI \\
    \nolinkurl{residential} & 372 & 107 & UCI/OpenML \\
    \nolinkurl{community_crime} & 1,994 & 1,954 & UCI/OpenML \\
    \nolinkurl{comm_violence} & 2,215 & 2,210 & UCI/OpenML \\
    \nolinkurl{coepra2} & 76 & 5,144 & CoEPrA/OpenML \\
    \nolinkurl{coepra1} & 89 & 5,787 & CoEPrA/OpenML \\
    \nolinkurl{coepra3} & 133 & 5,787 & CoEPrA/OpenML \\
    \bottomrule
  \end{tabular}
  \caption{Regression benchmark datasets.}
  \label{tab:benchmark-dataset-inventory-reg}
\end{table}

\FloatBarrier

\subsection{Synthetic datasets}

These tables report $n$, $p$, and key settings for each
synthetic feature-recovery design. In the classification table, \texttt{sep}
denotes class separation and \texttt{flip} denotes label-flip probability. In
the regression table, \texttt{noise} denotes the response-noise scale; for the
heteroscedastic row, \texttt{scale} controls the feature-dependent noise
multiplier.

The synthetic datasets are controlled recovery problems with known informative
features. The classification designs cover redundant features,
high-cardinality noise, nonlinear signal, standard and weak signal, Toeplitz
correlation, correlated-noise features, and sparse high-$p$ structure. The
regression designs cover heteroscedastic noise, redundant features, Friedman~1,
standard and weak signal, Toeplitz correlation, correlated-noise features, and
sparse high-$p$ structure. The settings keep the examples compact while
including one high-dimensional, small-sample case for each task.

\begin{table}[H]
  \centering
  \scriptsize
  \setlength{\tabcolsep}{4pt}
  \begin{tabular}{@{}L{0.26\linewidth}L{0.32\linewidth}S[table-format=4.0]S[table-format=4.0]@{}}
    \toprule
    \tablehead{Design} & \tablehead{Setting} & {$n$} & {$p$} \\
    \midrule
    Redundant features & 10 informative, 20 redundant features & 1,000 & 50 \\
    High-cardinality noise & 10 informative, 50 high-cardinality noise features, 500 levels & 1,000 & 100 \\
    Nonlinear signal & binarized Friedman~1, 5 informative & 1,000 & 100 \\
    Standard classification & 10 informative, sep $= 2.0$ & 1,000 & 100 \\
    Toeplitz correlation & 10 informative, $\rho = 0.95$ & 1,000 & 100 \\
    Weak signal classification & 10 informative, sep $= 0.1$, flip $= 0.1$ & 1,000 & 100 \\
    Correlated noise & 10 informative, 20 added correlated-noise features, $\rho = 0.9$ & 1,000 & 120 \\
    Sparse high-$p$ classification & 5 informative, sep $= 0.5$ & 200 & 1,000 \\
    \bottomrule
  \end{tabular}
  \caption{Synthetic classification designs used in the feature-recovery
  analysis.}
  \label{tab:synthetic-dataset-inventory-clf}
\end{table}

\begin{table}[H]
  \centering
  \scriptsize
  \setlength{\tabcolsep}{4pt}
  \begin{tabular}{@{}L{0.26\linewidth}L{0.32\linewidth}S[table-format=4.0]S[table-format=3.0]@{}}
    \toprule
    \tablehead{Design} & \tablehead{Setting} & {$n$} & {$p$} \\
    \midrule
    Heteroscedastic regression & heteroscedastic scale $= 4.0$, noise $= 5.0$ & 1,000 & 50 \\
    Redundant features & 10 informative, 20 redundant features & 1,000 & 50 \\
    Friedman 1 & 5 informative, noise $= 1.0$ & 1,000 & 100 \\
    Standard regression & 10 informative, noise $= 1.0$ & 1,000 & 100 \\
    Toeplitz correlation & 10 informative, $\rho = 0.95$ & 1,000 & 100 \\
    Weak signal regression & 10 informative, noise $= 50.0$ & 1,000 & 100 \\
    Correlated noise & 10 informative, 20 added correlated-noise features, $\rho = 0.9$ & 1,000 & 120 \\
    Sparse high-$p$ regression & 5 informative, noise $= 10.0$ & 200 & 500 \\
    \bottomrule
  \end{tabular}
  \caption{Synthetic regression designs used in the feature-recovery analysis.}
  \label{tab:synthetic-dataset-inventory-reg}
\end{table}

\section{Benchmark stability}
\label{app:benchmark-stability}

The benchmark-stability checks report dataset counts, complete-case CIF
comparisons, leave-one-dataset-out (LODO) configuration sensitivity, and
sensitivity checks by downstream learner, standard values of $k$, and seed.
\Cref{tab:benchmark-dataset-counts} summarizes the dataset counts used by the
benchmark analyses. Counts differ because each analysis is restricted to the
datasets, downstream learners, and standard top-$k$ values available for that
comparison. The main real-data counts are unique datasets with at least one
complete learner-by-$k$ combination for all methods; the complete-case analyses
below use the stricter set with every standard learner-by-$k$ combination
present.

\begin{table}[H]
  \centering
  \fontsize{8}{9.1}\selectfont
  \setlength{\tabcolsep}{4pt}
  \begin{tabularx}{0.88\linewidth}{@{}L{0.18\linewidth} C{0.18\linewidth} C{0.15\linewidth} Y@{}}
    \toprule
    \tablehead{Analysis} &
    Classification datasets & Regression datasets &
    \multicolumn{1}{>{\centering\arraybackslash}X}{Purpose} \\
    \midrule
    Main real-data comparison & 22 & 8 & Primary rank summary on real datasets \\
    Complete-case CIF and rank checks & 14 & 6 & Bootstrap intervals and omnibus rank summaries \\
    LODO configuration sensitivity & 14 & 6 & Held-out-dataset configuration checks \\
    Learner $\times$ $k$ sensitivity & 22, 21, 15, 15, 14 & 8, 8, 7, 7, 6 & Method ordering by downstream learner and value of $k$ \\
    Seed sensitivity & 12 & 6 & Seed variation on datasets complete across seeds \\
    \bottomrule
  \end{tabularx}
  \caption{Dataset counts for benchmark analyses. The learner-by-$k$ row lists
  counts at $k=5,10,25,50,100$; each summary uses the datasets, downstream
  learners, and standard top-$k$ values available for its comparison.}
  \label{tab:benchmark-dataset-counts}
\end{table}

\subsection{Bootstrap and rank checks}

\Cref{tab:cif-ci-comparisons} gives complete-case CIF comparisons against
ctree, cforest, and CIT after restricting to datasets with all methods,
downstream learners, and all five standard top-$k$ values available. Within
each task, all comparisons use the same datasets, the same downstream learners,
and $k \in \{5,10,25,50,100\}$.

\begin{table}[H]
  \centering
  \footnotesize
  \setlength{\tabcolsep}{4pt}
  \begin{tabular}{@{}ll S[table-format=1.3] c c@{}}
    \toprule
    \tablehead{Task} & \tablehead{Compared method} & {Mean $\Delta$} & 95\% CI & W--L \\
    \midrule
    Classification & ctree & 0.109 & [0.068, 0.154] & \winloss{14}{0} \\
    Classification & cforest & 0.111 & [0.067, 0.160] & \winloss{13}{1} \\
    Classification & CIT & 0.041 & [0.023, 0.060] & \winloss{13}{1} \\
    Regression & ctree & 0.297 & [0.113, 0.529] & \winloss{6}{0} \\
    Regression & cforest & 0.957 & [0.040, 2.570] & \winloss{5}{1} \\
    Regression & CIT & 0.480 & [-0.007, 1.224] & \winloss{5}{1} \\
    \bottomrule
  \end{tabular}
  \caption{Complete-case CIF differences against ctree, cforest, and CIT. Mean
  $\Delta$ is CIF minus comparator after averaging each dataset over the same
  downstream learners and values of $k$. Intervals resample dataset-level paired
  deltas with 20{,}000 percentile-bootstrap replicates. W--L gives dataset-level
  wins and losses; no exact ties occurred.}
  \label{tab:cif-ci-comparisons}
\end{table}

The metric is balanced accuracy for classification and $R^2$ for regression.
Positive differences favor CIF. The intervals summarize dataset-level paired
deltas after configuration selection. Large positive regression differences can
reflect negative mean $R^2$ for the comparator.

The Friedman checks use dataset-level method means after averaging the 15
learner-by-$k$ combinations. Classification: $\chi^2(16)=141.32$, $p<0.001$,
Kendall's $W=0.63$. Regression: $\chi^2(17)=33.79$, $p=0.0089$, Kendall's
$W=0.33$. For the 6-dataset regression panel, read the $p$-value together with
Kendall's $W$ and the LODO table.

\subsection{LODO configuration sensitivity}

\Cref{tab:lodo-config-sensitivity} reports LODO configuration selection for each
task. Each run selects configurations on all but one complete-case dataset in
that task, evaluates the held-out dataset, and averages scores and ranks over
downstream learners and standard top-$k$ values. CIF reselects the
main-benchmark configuration in all 14 classification holdouts. In regression, it uses
two configurations, reselects the global regression configuration in 5 of 6 holdouts, and
ties CatBoost for second by mean LODO rank behind ExtraTrees.

\begin{table}[!htbp]
  \centering
  \fontsize{8}{9.1}\selectfont
  \setlength{\tabcolsep}{3pt}
  \renewcommand{\arraystretch}{1.0}
  \begin{tabular*}{0.96\linewidth}{@{\extracolsep{\fill}}l
    S[table-format=2.2]
    S[table-format=1.3]
    S[table-format=1.0]
    c
    @{\hspace{0.9em}}
    l
    S[table-format=2.2]
    S[table-format=-1.3]
    S[table-format=1.0]
    c@{}}
    \toprule
    \multicolumn{5}{c}{Classification} &
    \multicolumn{5}{c}{Regression} \\
    \cmidrule(lr){1-5}\cmidrule(l){6-10}
    \tablehead{Method} & {Rank} & {Score} & {Settings} & {Main} &
    \tablehead{Method} & {Rank} & {Score} & {Settings} & {Main} \\
    \midrule
    LightGBM & 3.14 & 0.816 & 1 & \countfrac{14}{14} & ExtraTrees & 4.17 & 0.292 & 1 & \countfrac{6}{6} \\
    CatBoost & 4.29 & 0.805 & 1 & \countfrac{14}{14} & CatBoost & 5.50 & 0.229 & 1 & \countfrac{6}{6} \\
    RF & 4.79 & 0.802 & 1 & \countfrac{14}{14} & CIF & 5.50 & 0.214 & 2 & \countfrac{5}{6} \\
    XGBoost & 4.79 & 0.809 & 2 & \countfrac{6}{14} & RF & 6.00 & 0.260 & 1 & \countfrac{6}{6} \\
    CIF & 5.64 & 0.798 & 1 & \countfrac{14}{14} & RT & 6.00 & 0.266 & 1 & \countfrac{6}{6} \\
    ExtraTrees & 5.79 & 0.789 & 1 & \countfrac{14}{14} & DT & 8.00 & 0.124 & 1 & \countfrac{6}{6} \\
    RF-RFE & 6.57 & 0.795 & 1 & \countfrac{14}{14} & XGBoost & 8.50 & 0.149 & 2 & \countfrac{5}{6} \\
    DT & 7.43 & 0.774 & 1 & \countfrac{14}{14} & LightGBM & 9.67 & -0.214 & 1 & \countfrac{6}{6} \\
    Boruta & 9.14 & 0.747 & 1 & \countfrac{14}{14} & PI & 9.83 & -0.033 & 1 & \countfrac{6}{6} \\
    RT & 10.64 & 0.752 & 1 & \countfrac{14}{14} & PC filter & 9.83 & 0.098 & 1 & \countfrac{6}{6} \\
    CIT & 10.79 & 0.757 & 1 & \countfrac{14}{14} & RDC filter & 10.00 & -0.038 & 1 & \countfrac{6}{6} \\
    MC filter & 11.43 & 0.734 & 1 & \countfrac{14}{14} & CIT & 11.33 & -0.291 & 2 & \countfrac{5}{6} \\
    RDC filter & 11.86 & 0.707 & 1 & \countfrac{14}{14} & DC filter & 11.50 & -0.009 & 1 & \countfrac{6}{6} \\
    cforest & 13.29 & 0.687 & 1 & \countfrac{14}{14} & Boruta & 11.83 & 0.109 & 1 & \countfrac{6}{6} \\
    ctree & 13.64 & 0.689 & 1 & \countfrac{14}{14} & RF-RFE & 11.83 & 0.109 & 1 & \countfrac{6}{6} \\
    PI & 13.75 & 0.689 & 1 & \countfrac{14}{14} & CPI & 13.50 & -0.552 & 1 & \countfrac{6}{6} \\
    CPI & 16.04 & 0.598 & 1 & \countfrac{14}{14} & ctree & 13.83 & -0.035 & 1 & \countfrac{6}{6} \\
    \multicolumn{5}{c}{} & cforest & 14.17 & -0.925 & 2 & \countfrac{5}{6} \\
    \bottomrule
  \end{tabular*}
  \caption{LODO configuration sensitivity. Lower rank is better; rows are
  ordered by mean LODO rank within each task. Score is the mean held-out metric.
  Settings reports the number of distinct selected configurations; Main reports
  how often the main benchmark configuration is reselected. Ranks are computed
  within held-out datasets
  before averaging, so score order need not match rank order.}
  \label{tab:lodo-config-sensitivity}
\end{table}

\subsection{Breakdowns by downstream learner, \texorpdfstring{top-$k$}{top-k} value, and seed}

The main benchmark averages over downstream learners, standard values of $k$,
folds, and seeds. The next tables show how method ordering changes when results
are split by downstream learner, value of $k$, and seed. They use the
same selected method configurations as the main benchmark.

For the learner and $k$ checks, one unit is one downstream learner paired with
one standard value of $k$. Within each unit, methods are ordered by mean rank
among datasets where every benchmark method is available for that learner-by-$k$
combination; ties use average ranks.
\Cref{tab:classification-learner-k-sensitivity,tab:regression-learner-k-sensitivity}
show all 15 learner-by-$k$ combinations for each task. Rows follow the main
benchmark order. The
learner columns report mean position over the five standard values of $k$ for
that learner; lower positions are better. Mean position averages the positions
obtained after ordering methods within each learner-by-$k$ combination; mean
rank averages the dataset-level ranks directly. Top 3 of 15 and Top 5 of 15
count the learner-by-$k$ combinations where the method ranks in the top 3 or
top 5.

\begin{table}[H]
  \centering
  \footnotesize
  \setlength{\tabcolsep}{3pt}
  \renewcommand{\arraystretch}{1.04}
  \begin{tabular}{@{}l
    S[table-format=2.1]
    S[table-format=2.1]
    S[table-format=2.1]
    S[table-format=2.1]
    S[table-format=2.1]
    c
    c
    c@{}}
    \toprule
    \tablehead{Method} & {KNN} & {LR} & {SVM} & {Mean position} & {Mean rank} & {Best--worst} &
    \shortstack{Top 3\\of 15} & \shortstack{Top 5\\of 15} \\
    \midrule
    LightGBM & 3.0 & 1.6 & 1.6 & 2.1 & 4.5 & 1.0--7.0 & \countfrac{14}{15} & \countfrac{14}{15} \\
    XGBoost & 4.0 & 2.2 & 2.2 & 2.8 & 5.0 & 1.0--5.0 & \countfrac{12}{15} & \countfrac{15}{15} \\
    CatBoost & 2.9 & 3.4 & 3.4 & 3.2 & 5.2 & 1.0--7.0 & \countfrac{11}{15} & \countfrac{12}{15} \\
    CIF & 3.0 & 5.6 & 4.6 & 4.4 & 6.0 & 1.0--8.0 & \countfrac{3}{15} & \countfrac{12}{15} \\
    RF & 4.8 & 4.6 & 4.8 & 4.7 & 6.2 & 1.0--7.0 & \countfrac{1}{15} & \countfrac{11}{15} \\
    RF-RFE & 5.5 & 5.8 & 5.6 & 5.6 & 6.9 & 1.5--8.0 & \countfrac{2}{15} & \countfrac{5}{15} \\
    ExtraTrees & 6.4 & 6.2 & 8.0 & 6.9 & 7.2 & 4.0--9.0 & \countfrac{0}{15} & \countfrac{2}{15} \\
    DT & 8.1 & 6.9 & 7.0 & 7.3 & 8.0 & 3.0--12.0 & \countfrac{2}{15} & \countfrac{4}{15} \\
    RT & 10.6 & 9.8 & 10.0 & 10.1 & 9.9 & 8.0--12.0 & \countfrac{0}{15} & \countfrac{0}{15} \\
    CIT & 11.2 & 9.8 & 10.2 & 10.4 & 10.1 & 8.0--15.0 & \countfrac{0}{15} & \countfrac{0}{15} \\
    Boruta & 9.6 & 11.5 & 9.8 & 10.3 & 9.5 & 6.0--14.0 & \countfrac{0}{15} & \countfrac{0}{15} \\
    MC filter & 10.7 & 11.8 & 12.0 & 11.5 & 10.8 & 9.0--13.0 & \countfrac{0}{15} & \countfrac{0}{15} \\
    PI & 13.4 & 15.0 & 13.2 & 13.9 & 12.1 & 10.0--16.0 & \countfrac{0}{15} & \countfrac{0}{15} \\
    cforest & 14.2 & 12.4 & 14.6 & 13.7 & 12.1 & 11.0--16.0 & \countfrac{0}{15} & \countfrac{0}{15} \\
    RDC filter & 13.4 & 14.1 & 13.6 & 13.7 & 12.1 & 10.0--16.0 & \countfrac{0}{15} & \countfrac{0}{15} \\
    ctree & 15.6 & 15.5 & 15.4 & 15.5 & 13.0 & 13.0--17.0 & \countfrac{0}{15} & \countfrac{0}{15} \\
    CPI & 16.6 & 16.8 & 17.0 & 16.8 & 14.4 & 15.0--17.0 & \countfrac{0}{15} & \countfrac{0}{15} \\
    \bottomrule
  \end{tabular}
  \caption{Classification sensitivity by downstream learner and value of $k$.
  Lower values are better. KNN, LR, and SVM denote downstream learners;
  Top 3 of 15 and Top 5 of 15 count the learner-by-$k$ combinations where the
  method ranks in the top 3 or top 5. Mean position averages ordinal method
  positions over the 15 combinations; mean rank averages the underlying
  dataset-level ranks. The dataset counts at
  $k=5,10,25,50,100$ are
  22, 21, 15, 15, and 14.}
  \label{tab:classification-learner-k-sensitivity}
\end{table}

In the classification learner-by-$k$ sensitivity table, CIF is top five in 12 of
the 15 learner-by-$k$ combinations and top three in 3 of 15. Its mean position
is close to RF's and is better than the single-tree and filter baselines in the
mean-position summary. LightGBM, XGBoost, and CatBoost have lower mean
positions.

\begin{table}[H]
  \centering
  \footnotesize
  \setlength{\tabcolsep}{3pt}
  \renewcommand{\arraystretch}{1.04}
  \begin{tabular}{@{}l
    S[table-format=2.1]
    S[table-format=2.1]
    S[table-format=2.1]
    S[table-format=2.1]
    S[table-format=2.1]
    c
    c
    c@{}}
    \toprule
    \tablehead{Method} & {KNN} & {Ridge} & {SVR} & {Mean position} & {Mean rank} & {Best--worst} &
    \shortstack{Top 3\\of 15} & \shortstack{Top 5\\of 15} \\
    \midrule
    ExtraTrees & 5.1 & 2.7 & 2.1 & 3.3 & 6.3 & 1.0--9.0 & \countfrac{10}{15} & \countfrac{12}{15} \\
    CatBoost & 1.4 & 7.1 & 3.0 & 3.8 & 6.9 & 1.0--12.5 & \countfrac{8}{15} & \countfrac{12}{15} \\
    CIF & 3.2 & 7.2 & 3.4 & 4.6 & 7.1 & 1.0--10.0 & \countfrac{7}{15} & \countfrac{10}{15} \\
    RF & 6.7 & 3.0 & 8.4 & 6.0 & 7.9 & 1.0--11.5 & \countfrac{4}{15} & \countfrac{6}{15} \\
    RT & 4.2 & 8.2 & 5.6 & 6.0 & 8.0 & 1.5--13.0 & \countfrac{3}{15} & \countfrac{10}{15} \\
    DT & 5.9 & 8.4 & 7.3 & 7.2 & 8.5 & 1.0--14.0 & \countfrac{2}{15} & \countfrac{2}{15} \\
    LightGBM & 7.5 & 9.0 & 8.7 & 8.4 & 8.9 & 1.0--16.5 & \countfrac{3}{15} & \countfrac{4}{15} \\
    RDC filter & 9.8 & 12.2 & 5.3 & 9.1 & 9.3 & 3.0--17.5 & \countfrac{1}{15} & \countfrac{3}{15} \\
    CIT & 12.4 & 4.1 & 12.1 & 9.5 & 9.4 & 2.0--16.0 & \countfrac{2}{15} & \countfrac{3}{15} \\
    PC filter & 10.9 & 11.5 & 9.3 & 10.6 & 9.8 & 7.5--16.0 & \countfrac{0}{15} & \countfrac{0}{15} \\
    DC filter & 13.7 & 14.0 & 7.4 & 11.7 & 10.3 & 4.0--17.0 & \countfrac{0}{15} & \countfrac{1}{15} \\
    XGBoost & 6.0 & 14.8 & 9.1 & 10.0 & 9.5 & 2.0--16.0 & \countfrac{1}{15} & \countfrac{4}{15} \\
    Boruta & 12.4 & 7.1 & 12.6 & 10.7 & 9.8 & 2.0--16.0 & \countfrac{1}{15} & \countfrac{2}{15} \\
    PI & 10.2 & 6.2 & 13.8 & 10.1 & 9.7 & 2.0--15.0 & \countfrac{1}{15} & \countfrac{2}{15} \\
    RF-RFE & 14.8 & 11.0 & 12.4 & 12.7 & 10.8 & 3.0--18.0 & \countfrac{1}{15} & \countfrac{1}{15} \\
    cforest & 17.3 & 12.7 & 16.1 & 15.4 & 12.5 & 5.5--18.0 & \countfrac{0}{15} & \countfrac{0}{15} \\
    ctree & 12.6 & 16.0 & 17.8 & 15.5 & 12.9 & 8.0--18.0 & \countfrac{0}{15} & \countfrac{0}{15} \\
    CPI & 16.9 & 15.8 & 16.6 & 16.4 & 13.2 & 9.0--18.0 & \countfrac{0}{15} & \countfrac{0}{15} \\
    \bottomrule
  \end{tabular}
  \caption{Regression sensitivity by downstream learner and value of $k$.
  Lower values are better. KNN, Ridge, and SVR denote downstream
  learners; Top 3 of 15 and Top 5 of 15 count the learner-by-$k$ combinations
  where the method ranks in the top 3 or top 5. Mean position averages ordinal
  method positions over the 15 combinations; mean rank averages the underlying
  dataset-level ranks. The dataset counts at
  $k=5,10,25,50,100$ are
  8, 8, 7, 7, and 6.}
  \label{tab:regression-learner-k-sensitivity}
\end{table}

CIF has the third-lowest regression mean position in this breakdown, after
ExtraTrees and CatBoost, and remains ahead of ctree, cforest, CIT, and most
filter and wrapper methods in this summary. The pairwise figures use the
datasets available for each comparator; the complete-case panels above use the
14/6 all-method dataset sets.

\begin{figure}[H]
  \centering
  \includegraphics[width=0.88\linewidth]{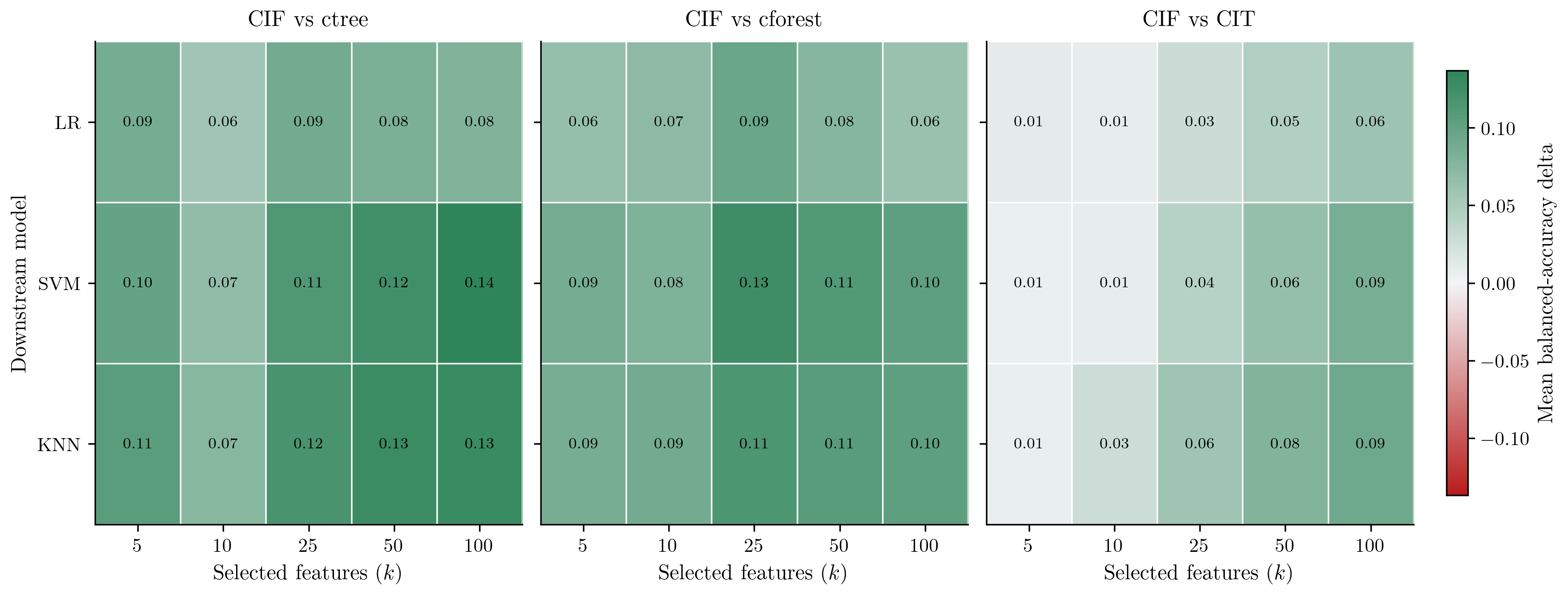}
  \caption{Classification CIF margins against ctree, cforest, and CIT by
  downstream learner and value of $k$. Each plotted value is the
  mean balanced-accuracy difference, CIF minus comparator, for one
  downstream learner and one value of $k$; positive values favor CIF. Dataset
  counts by $k$ are 22, 21, 15, 15, and 14 for ctree and cforest, and
  23, 22, 16, 16, and 15 for CIT.}
  \label{fig:benchmark-pairwise-sensitivity}
\end{figure}

\begin{figure}[H]
  \centering
  \includegraphics[width=0.88\linewidth]{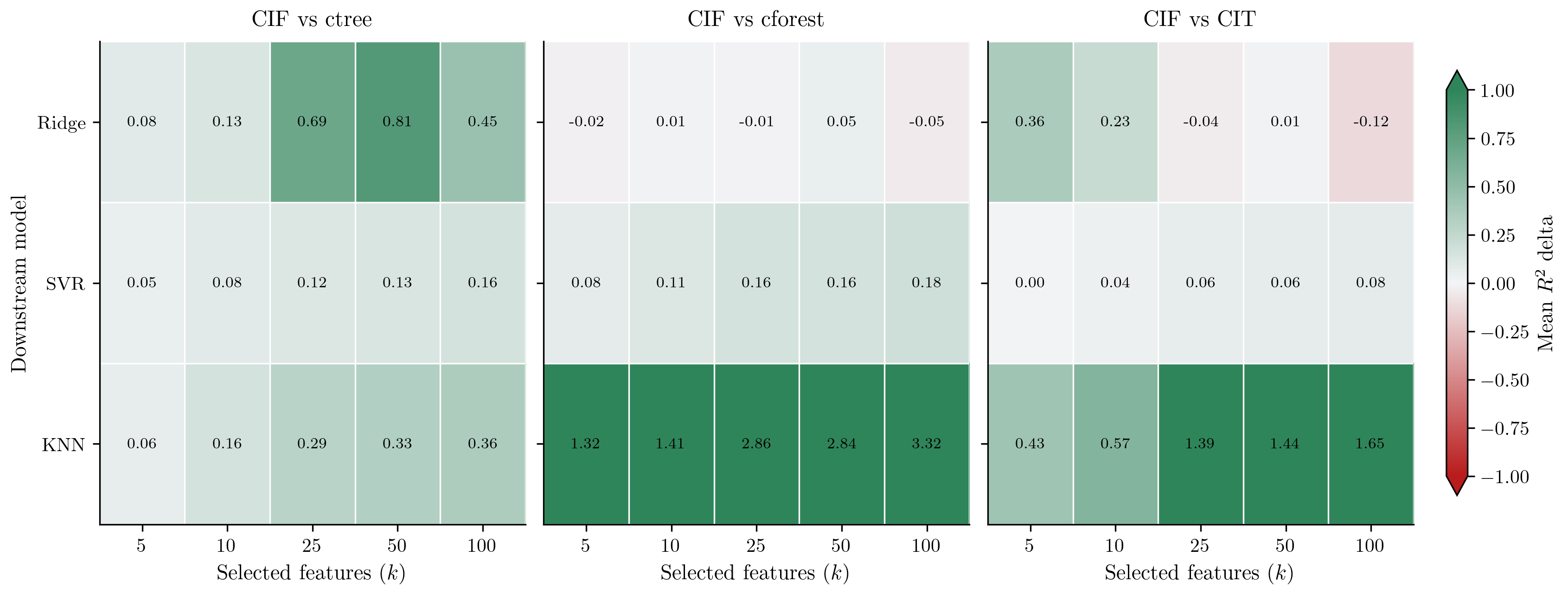}
  \caption{Regression CIF margins against ctree, cforest, and CIT by
  downstream learner and value of $k$. Each plotted value is the mean
  $R^2$ difference, CIF minus comparator, for one downstream
  learner and one value of $k$; positive values favor CIF. Dataset counts by
  $k$ are 8, 8, 7, 7, and 6 for each comparison. Large positive plotted values
  can reflect negative mean $R^2$ for the comparator. Numeric labels show the
  unclipped values; the color scale is clipped at $\pm1$ for readability.}
  \label{fig:regression-benchmark-pairwise-sensitivity}
\end{figure}

For the seed checks, each method is ranked separately within each fixed seed.
Keeping only datasets with results for every method, downstream learner,
standard top-$k$ value, and seed leaves 12 classification datasets and 6
regression datasets complete across all five seeds. These tables summarize seed
variation by rank without intervals or tests. In these tables, a seed column is
the method's ordinal position after sorting that seed's averaged dataset-level
ranks, while mean rank is the averaged dataset-level rank value itself. Rows
follow the main benchmark order.

\begin{table}[H]
  \centering
  \footnotesize
  \setlength{\tabcolsep}{4pt}
  \renewcommand{\arraystretch}{1.04}
  \begin{tabular}{@{}l
    S[table-format=2.0]
    S[table-format=2.0]
    S[table-format=2.0]
    S[table-format=2.0]
    S[table-format=2.0]
    S[table-format=2.1]
    S[table-format=2.1]@{}}
    \toprule
    \tablehead{Method} & {Seed 1} & {Seed 2} & {Seed 3} & {Seed 4} & {Seed 5} & {Mean position} & {Mean rank} \\
    \midrule
    LightGBM & 2 & 1 & 2 & 1 & 1 & 1.4 & 4.7 \\
    XGBoost & 1 & 3 & 1 & 2 & 2 & 1.8 & 5.0 \\
    CatBoost & 3 & 2 & 5 & 3 & 4 & 3.4 & 5.7 \\
    CIF & 5 & 5 & 4 & 4 & 7 & 5.0 & 6.1 \\
    RF & 4 & 4 & 3 & 6 & 3 & 4.0 & 6.0 \\
    RF-RFE & 7 & 7 & 7 & 5 & 6 & 6.4 & 7.2 \\
    ExtraTrees & 6 & 6 & 6 & 7 & 5 & 6.0 & 6.5 \\
    DT & 8 & 9 & 9 & 8 & 8 & 8.4 & 8.5 \\
    RT & 11 & 10 & 12 & 12 & 11 & 11.2 & 10.0 \\
    CIT & 10 & 11 & 10 & 10 & 10 & 10.2 & 9.7 \\
    Boruta & 9 & 8 & 8 & 9 & 9 & 8.6 & 8.4 \\
    MC filter & 12 & 12 & 11 & 11 & 12 & 11.6 & 10.4 \\
    PI & 16 & 16 & 16 & 16 & 16 & 16.0 & 13.1 \\
    cforest & 15 & 15 & 15 & 14 & 15 & 14.8 & 12.6 \\
    RDC filter & 13 & 13 & 13 & 13 & 13 & 13.0 & 11.5 \\
    ctree & 14 & 14 & 14 & 15 & 14 & 14.2 & 12.5 \\
    CPI & 17 & 17 & 17 & 17 & 17 & 17.0 & 15.3 \\
    \bottomrule
  \end{tabular}
  \caption{Classification seed sensitivity on the 12 datasets complete across
  all five seeds. Each seed column is the method's ordinal position after
  sorting methods by their averaged dataset-level ranks for that seed. Mean
  position averages those five ordinal positions; mean rank reports the averaged
  dataset-level rank value itself. Lower values are better.}
  \label{tab:classification-seed-sensitivity}
\end{table}

\begin{table}[H]
  \centering
  \footnotesize
  \setlength{\tabcolsep}{4pt}
  \renewcommand{\arraystretch}{1.04}
  \begin{tabular}{@{}l
    S[table-format=2.0]
    S[table-format=2.0]
    S[table-format=2.0]
    S[table-format=2.0]
    S[table-format=2.0]
    S[table-format=2.1]
    S[table-format=2.1]@{}}
    \toprule
    \tablehead{Method} & {Seed 1} & {Seed 2} & {Seed 3} & {Seed 4} & {Seed 5} & {Mean position} & {Mean rank} \\
    \midrule
    ExtraTrees & 2 & 1 & 1 & 2 & 2 & 1.6 & 6.8 \\
    CatBoost & 1 & 6 & 3 & 1 & 1 & 2.4 & 6.9 \\
    CIF & 5 & 5 & 5 & 4 & 6 & 5.0 & 7.9 \\
    RF & 4 & 3 & 4 & 6 & 5 & 4.4 & 8.0 \\
    RT & 3 & 2 & 2 & 5 & 4 & 3.2 & 7.6 \\
    DT & 11 & 4 & 7 & 3 & 3 & 5.6 & 8.1 \\
    LightGBM & 7 & 8 & 8 & 8 & 7 & 7.6 & 8.7 \\
    RDC filter & 12 & 11 & 9 & 12 & 11 & 11.0 & 9.8 \\
    CIT & 8 & 9 & 12 & 10 & 13 & 10.4 & 9.4 \\
    PC filter & 13 & 13 & 13 & 13 & 12 & 12.8 & 10.5 \\
    DC filter & 15 & 15 & 14 & 15 & 14 & 14.6 & 11.1 \\
    XGBoost & 6 & 7 & 10 & 7 & 8 & 7.6 & 8.8 \\
    Boruta & 10 & 12 & 11 & 11 & 9 & 10.6 & 9.7 \\
    PI & 9 & 10 & 6 & 9 & 10 & 8.8 & 9.1 \\
    RF-RFE & 14 & 14 & 15 & 14 & 15 & 14.4 & 10.9 \\
    cforest & 16 & 17 & 16 & 16 & 17 & 16.4 & 12.3 \\
    ctree & 18 & 18 & 18 & 17 & 18 & 17.8 & 13.1 \\
    CPI & 17 & 16 & 17 & 18 & 16 & 16.8 & 12.3 \\
    \bottomrule
  \end{tabular}
  \caption{Regression seed sensitivity on the 6 datasets complete across all
  five seeds. Each seed column is the method's ordinal position after sorting
  methods by their averaged dataset-level ranks for that seed. Mean position
  averages those five ordinal positions; mean rank reports the averaged
  dataset-level rank value itself. Lower values are better.}
  \label{tab:regression-seed-sensitivity}
\end{table}

\section{Feature use in sampled forests}
\label{app:candidate-availability}

The extended $p$-sweep expands the main-text feature-sampling
analysis, separate from the benchmark-stability summaries in
\cref{app:benchmark-stability}. \Cref{fig:candidate-dimension-curves,fig:regression-candidate-dimension-curves}
show feature-sampling results for $p \in \{100,500,1{,}000\}$.
Each forest uses 1,000 trees, with classification and regression shown
separately. For each method and feature dimension $p$, each point is the mean over
$n_{\mathrm{informative}} \in \{1,2,5,10\}$, and the capped bar spans the
minimum and maximum over those four sparse settings.

In classification, evaluating all nonconstant features at each split gives the
largest informative-feature gains in sparse high-$p$ settings. CIF-all keeps
feature muting and the other tree settings unchanged while removing feature
subsampling. With two informative features and $p{=}100$, CIF uses informative
features in 40.0\% of splits, while CIF-all does so in 100.0\%.
The corresponding values are 12.2\% versus 90.2\% at $p{=}500$ and 9.0\% versus
100.0\% at $p{=}1{,}000$. CIF-all also uses 70, 264, and 439 fewer distinct
uninformative features than CIF. With five informative features, the CIF-all
minus CIF differences are 25.1, 56.4, and 54.3 percentage points
at $p{=}100$, $p{=}500$, and $p{=}1{,}000$.

Regression shows a similar informative-feature use pattern in the one-
and two-informative-feature settings, and CIF-all also uses more distinct
uninformative features. With two informative features and $p{=}100$, CIF uses
informative features in 39.9\% of splits, while CIF-all does so in 58.8\%.
The corresponding values are 19.7\% versus 46.7\% at $p{=}500$ and 11.6\%
versus 35.9\% at $p{=}1{,}000$. With one informative feature, the CIF-all minus
CIF differences are 29.8, 18.9, and 42.1 percentage points.

\begin{figure}[H]
  \centering
  \includegraphics[width=0.88\linewidth]{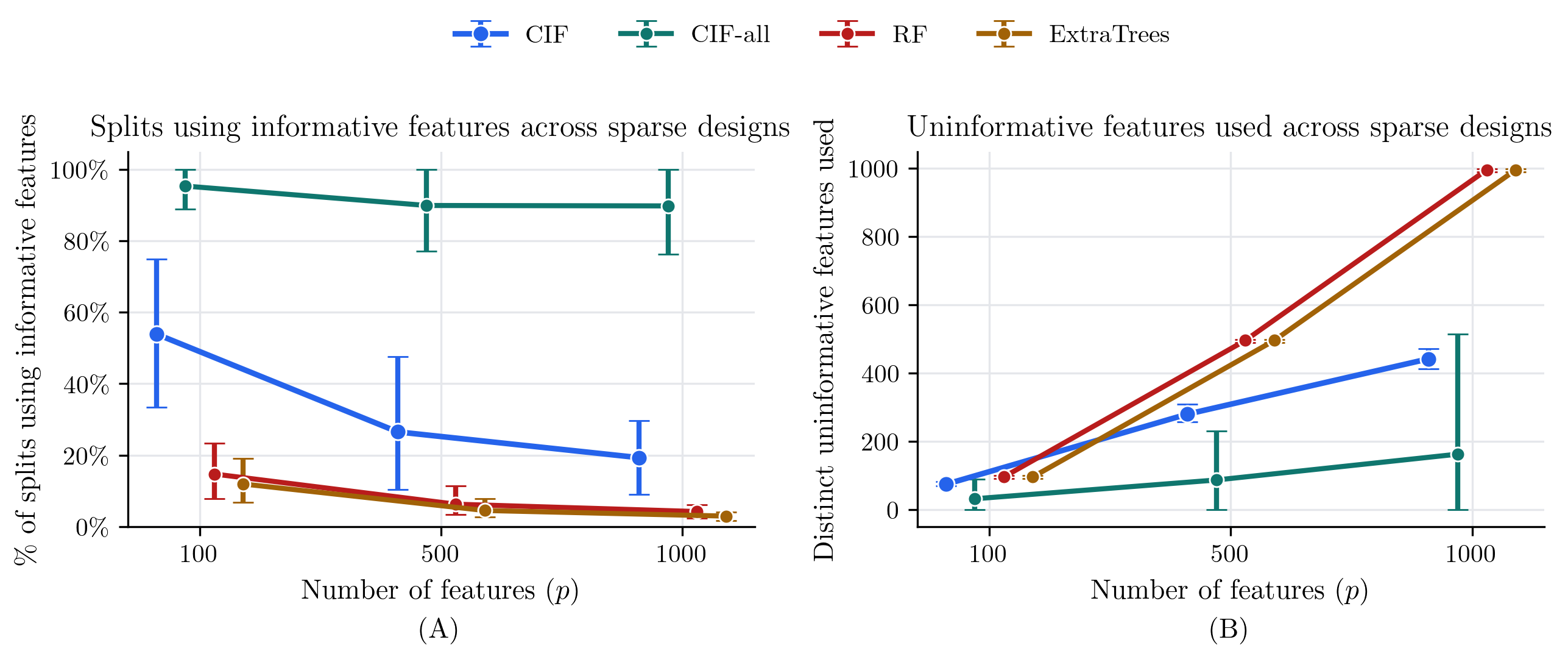}
  \caption{Classification forest feature sampling by feature dimension
  $p$ with 1,000 trees per fit. Points are means over
  $n_{\mathrm{informative}} \in \{1,2,5,10\}$; capped bars span the
  minimum-to-maximum values across those four sparse designs.
  (A) shows \% of splits using informative features; (B) shows
  the number of distinct uninformative features used.}
  \label{fig:candidate-dimension-curves}
\end{figure}

\begin{figure}[H]
  \centering
  \includegraphics[width=0.88\linewidth]{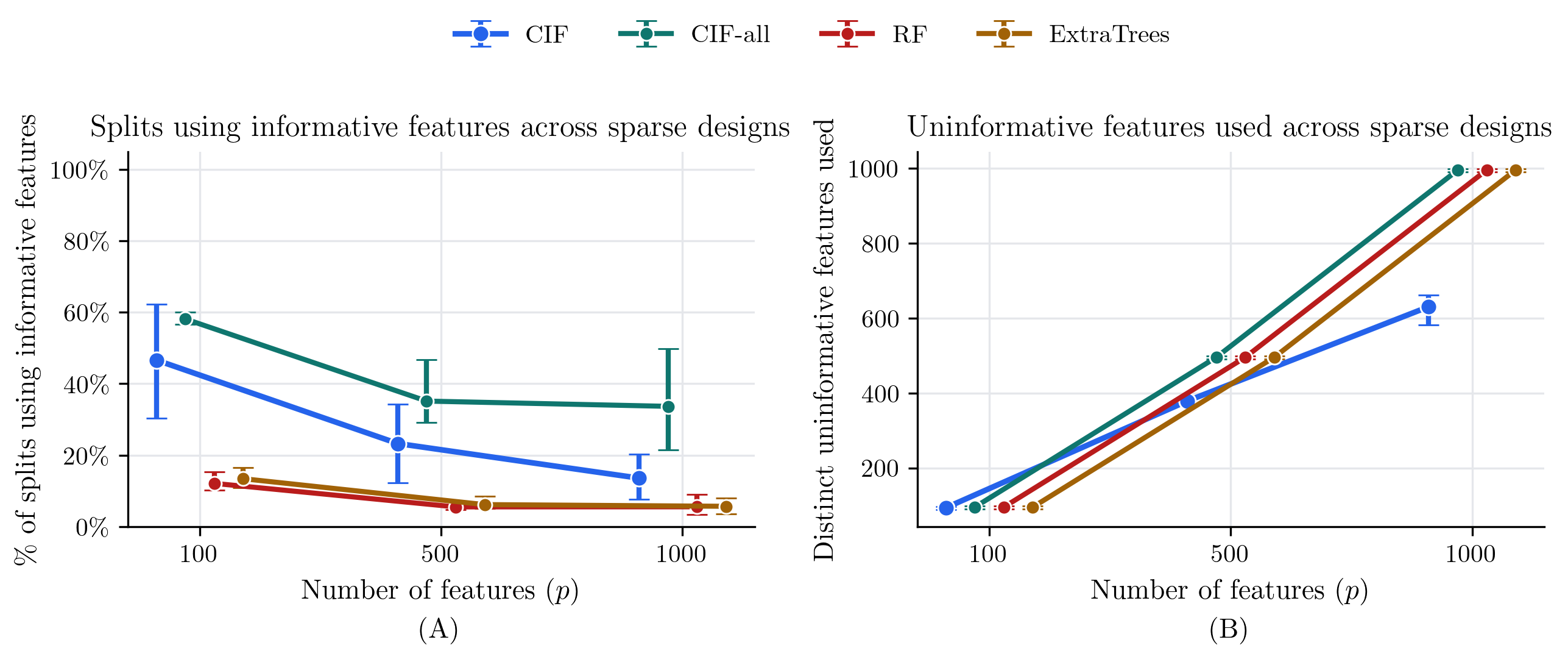}
  \caption{Regression forest feature sampling by feature dimension
  $p$ with 1,000 trees per fit. Points are means over
  $n_{\mathrm{informative}} \in \{1,2,5,10\}$; capped bars span the
  minimum-to-maximum values across those four sparse designs.
  (A) shows \% of splits using informative features; (B) shows
  the number of distinct uninformative features used.}
  \label{fig:regression-candidate-dimension-curves}
\end{figure}

\end{appendices}

\end{document}